\theoremstyle{plain}
\newtheorem{proposition}{Proposition}
\theoremstyle{definition}
\theoremstyle{remark}
\icmltitlerunning{Model-Aware Contrastive Learning: Towards Escaping the Dilemmas}
\begin{document}

\twocolumn[
\icmltitle{Model-Aware Contrastive Learning: Towards Escaping the Dilemmas}




\begin{icmlauthorlist}
\icmlauthor{Zizheng Huang}{nju}
\icmlauthor{Haoxing Chen}{nju,ant}
\icmlauthor{Ziqi Wen}{telcom}
\icmlauthor{Chao Zhang}{nju}
\icmlauthor{Huaxiong Li}{nju}
\icmlauthor{Bo Wang}{nju}
\icmlauthor{Chunlin Chen}{nju}

\end{icmlauthorlist}

\icmlaffiliation{nju}{Nanjing University}
\icmlaffiliation{telcom}{China Telecom}
\icmlaffiliation{ant}{Ant Group}

\icmlcorrespondingauthor{Huaxiong Li}{huaxiongli@nju.edu.cn}
\icmlcorrespondingauthor{Haoxing Chen}{hx.chen@hotmail.com}
\icmlcorrespondingauthor{Zizheng Huang}{zizhenghuang@smail.nju.edu.cn}

\icmlkeywords{Representation Learning, Contrastive Learning}

\vskip 0.3in
]



\printAffiliationsAndNotice{}  

\begin{abstract}
Contrastive learning (CL) continuously achieves significant breakthroughs across multiple domains. However, the most common InfoNCE-based methods suffer from some dilemmas, such as \textit{uniformity-tolerance dilemma} (UTD) and \textit{gradient reduction}, both of which are related to a $\mathcal{P}_{ij}$ term. It has been identified that UTD can lead to unexpected performance degradation. We argue that the fixity of temperature is to blame for UTD. To tackle this challenge, we enrich the CL loss family by presenting a Model-Aware Contrastive Learning (MACL) strategy, whose temperature is adaptive to the magnitude of alignment that reflects the basic confidence of the instance discrimination task, then enables CL loss to adjust the penalty strength for hard negatives adaptively. Regarding another dilemma, the gradient reduction issue, we derive the limits of an involved gradient scaling factor, which allows us to explain from a unified perspective why some recent approaches are effective with fewer negative samples, and summarily present a gradient reweighting to escape this dilemma. Extensive remarkable empirical results in vision, sentence, and graph modality validate our approach's general improvement for representation learning and downstream tasks.
\end{abstract}

\begin{figure}[t]
	\center
	\includegraphics[width=7cm]{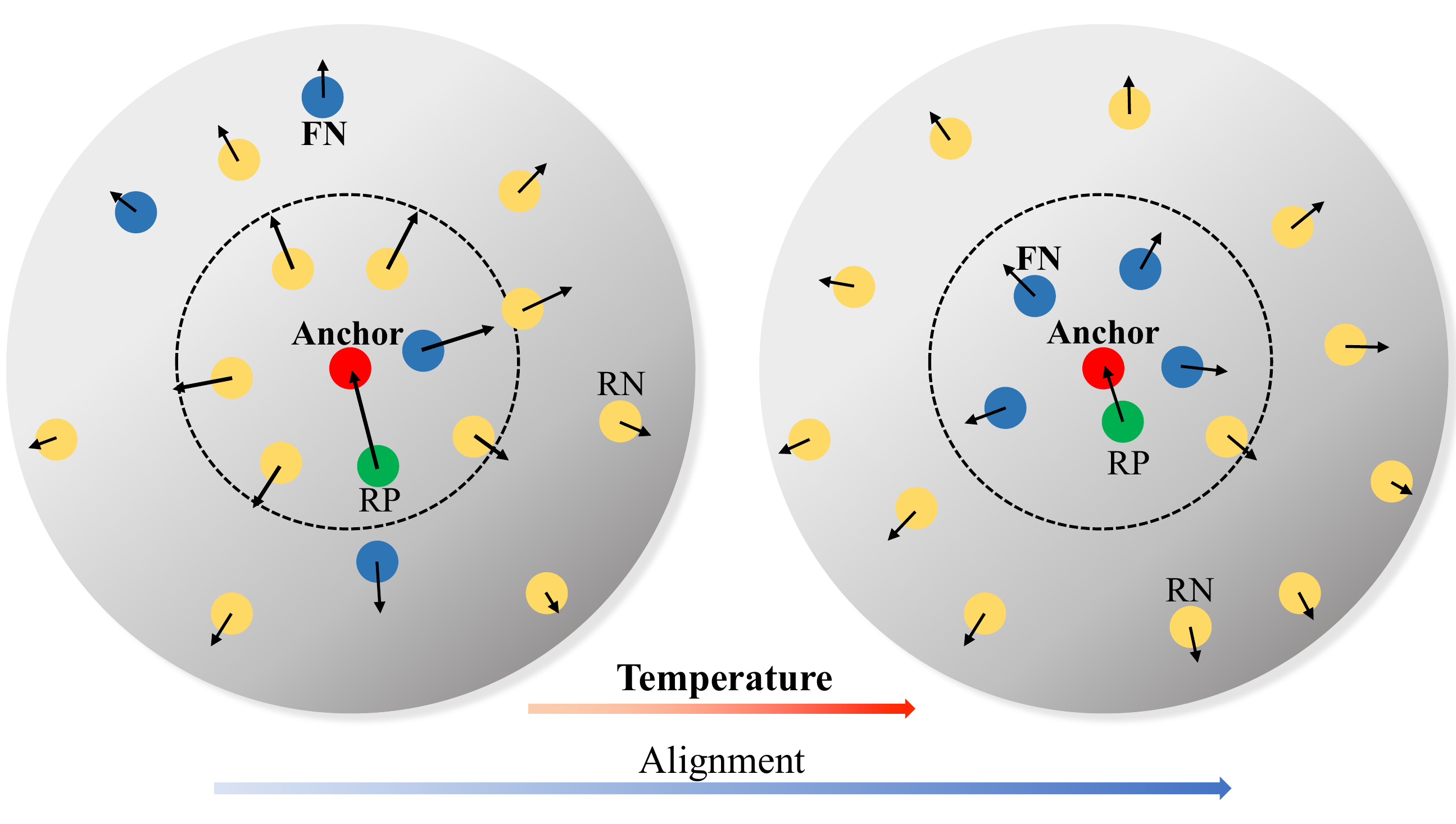}
	\caption{\textbf{Illustration of model-aware temperature strategy.} Points in red, green, yellow, and blue on the hypersphere denote \textcolor[RGB]{255,0,0}{anchor}, the \textcolor[RGB]{0,176,80}{real positive sample (RP)}, \textcolor[RGB]{255,217,102}{real negative samples (RN)}, and \textcolor[RGB]{46,117,182}{false negatives (FNs)}, respectively. Since alignment magnitude can indicate discrimination confidence of the CL model, then the alignment-adaptive temperature dynamically controls penalty strength (arrow length) to negative samples to balance uniformity and tolerance for samples.}
	\label{frame}
\end{figure}

\section{Introduction}
Modern representation learning has been greatly facilitated by deep neural networks \cite{bengio2013representation,dosovitskiy2020image,he2016deep,vaswani2017attention}. Self-supervised learning (SSL) is one of the most popular paradigms in the unsupervised scenario, which can learn transferable representations without depending on manual labeling \cite{gidaris2018unsupervised,he2022masked,grill2020bootstrap}. Especially, SSL methods based on contrastive loss have highly boosted CV, NLP, graph, and multi-modal tasks \cite{chen2020big,he2020momentum,you2021graph,gao2021simcse,radford2021learning}. These contrastive learning (CL) frameworks generally map raw data onto a hypersphere embedding space, whose embedding similarity can reflect the semantic relationship \cite{Wu_2018_CVPR,he2020momentum}. Among diverse contrastive losses, InfoNCE \cite{van2018representation,tian2020contrastive} is widely adopted in various CL algorithms \cite{chen2020simple,chen2021empirical,dwibedi2021little}, which attempts to attract positive samples to the anchor while pushing all the negative samples away.

InfoNCE loss is essential to the success of CL \cite{tian2022deep,wang2020understanding} but still troubled by several dilemmas. An interesting hardness-aware property has been pointed out, which enables CL automatically concentrate on hard negative samples (HNs, those having high similarities with the anchor) \cite{wang2021understanding,tian2022deep}. Particularly, the temperature parameter $\tau$ determines the weight distribution on negatives. But this also causes a Uniformity-Tolerance Dilemma (UTD) that plagues CL performance \cite{wang2021understanding}. Specifically, as for the common instance discrimination task in CL, models are trained by maximizing the similarities of the anchor with its augmentations and minimizing that of all the other different instances \cite{Wu_2018_CVPR,tian2020makes}. Such a strategy neglects the underlying semantic relationships, which can be explicitly subscribed by labels when in the supervised scenario. Those HNs might contain false negative samples (FNs) in this context. Owing to the hardness-aware property, a smaller $\tau$ is conducive to the uniformity of the embedding space \cite{wang2020understanding}, but goes against FNs due to excessive penalties on HNs. On the contrary, larger temperature parameters are beneficial for exploring underlying semantic correlations, while detrimental for learning separable informative features.

This work mainly focuses on two dilemmas in CL, both of which are related to a $\mathcal{P}_{ij}$ term. (1) \textit{The uniformity-tolerance dilemma}, which is still an open problem in contrastive learning. We argue that a training-adaptive temperature is key to alleviating UTD. In the learning phase, alignment of positive paris \cite{wang2020understanding} exactly can reflect the prior expectation of the instance discrimination task but also needs no extra computations in InfoNCE. Specifically, its alignment is underperforming for a poorly trained CL model. In this case, a smaller temperature parameter does help to improve the uniformity of the hypersphere embedding space \cite{wang2020understanding}. In contrast, the well-trained one is much better in terms of alignment, for which a larger temperature contributes to the tolerance for latent semantic relationships. Thus, we propose a model-aware temperature strategy based on alignment to solve the UTD problem. This strategy is illustrated in Figure \ref{frame}. (2) \textit{The gradient reduction dilemma of InfoNCE}. We identify the importance of negative sample size $K$ and temperature $\tau$ for this gradient reduction problem. From a unified perspective, two propositions explain why some previous work \cite{yeh2022decoupled,zhang2022dual,chen2021simpler} are experimentally valid. As a result, we also provide a reweighting method for learning with small negative sizes. Owing to these explorations and Model-Aware Contrastive Learning (MACL) strategy, we reconstruct the contrastive loss to enable CL models to generate high-quality representations. Experiments and analysis on some benchmarks in different modalities demonstrate that the proposed MACL strategy does help improve the learned embeddings and escape dilemmas.

\section{Related Work}
Self-supervised learning has achieved significant success, which can provide semantically meaningful representations for downstream tasks \cite{bardes2021vicreg, radford2021learning, zbontar2021barlow,he2017mask,karpukhin2020dense}. More recently, the instance discrimination task has achieved state-of-the-art, and even exhibited to be competitive performance to supervised methods \cite{chen2020simple,chen2021empirical,gao2021simcse,dwibedi2021little}.

\subsection{Contrastive Self-Supervised Learning}
Contrastive instance discrimination originates from \citep{dosovitskiy2014discriminative,Wu_2018_CVPR}, whose core idea is to learn instance-invariant representations, i.e. each instance is viewed as a single class. The rational assumption behind is that maximizing similarities of the positive pairs and minimizing negative similarities can equip models with discrimination \cite{van2018representation}. To construct the negative sampling appropriately,  \citet{Wu_2018_CVPR,tian2020contrastive} and Moco family \cite{he2020momentum,chen2020improved} adopt extra structures to store negative vectors of instances. Instead, without additional parts for storing negative samples, other methods explore negative sampling within a large mini-batch, e.g., SimCLR \cite{chen2020simple}, CLIP \cite{radford2021learning}, and SimCSE \cite{gao2021simcse}. Some approaches successfully incorporate clusters or prototypes into CL \cite{caron2020unsupervised,huang2019unsupervised,dwibedi2021little,li2020prototypical}. It is also possible to learn only relying on positive samples \cite{grill2020bootstrap,chen2021exploring}, but InfoNCE-based contrastive methods are still the mainstream for various modalities and tasks \cite{afham2022crosspoint,gao2021simcse,radford2021learning,wang2021dense,pmlr-v162-li22v}.

\subsection{Contrastive InfoNCE Loss}
To understand the success of CL methods and enhance them, recent work has attempted to explore important properties of contrastive loss \cite{jing2021understanding}. InfoNCE is constructed by CPC \cite{van2018representation} and CMC \cite{tian2020contrastive} to maximize the mutual information of features from same the instance. Besides, some work focuses on the positive and negative pairwise similarity in InfoNCE. For example, \citet{wang2020understanding} attribute the effectiveness of InfoNCE to the asymptotical alignment and uniformity properties of features on hypersphere. Following this, \citet{wang2021understanding} have proven that the temperature parameter plays an essential role in controlling the penalty strength on negative samples, which is related to the hardness-aware property and a uniformity-tolerance dilemma. This temperature effect is also mentioned in \cite{chen2021simpler}. $\alpha$-CL \cite{tian2022deep} formulates InfoNCE as a coordinate-wise optimization, in which the pairwise importance $\alpha$ determines the importance weights of samples. 

Motivated by reducing the training batch size, DCL \cite{yeh2022decoupled} removes the positive similarity in the denominator of InfoNCE to eliminate a negative-positive-coupling effect. Furthermore, \citet{zhang2022dual} extend the hardness-aware property anchor-wise and introduce an extra larger temperature for InfoNCE. There are also some efforts in explicitly modeling false/hard negative samples in training to improve CL \cite{shah2022max,kalantidis2020hard}, e.g., HCL \cite{robinson2020hard} develops an importance sampling strategy to recognize true and false negatives. Our work mainly focuses on alleviating uniformity-tolerance dilemma and exploring the gradient reduction problem.

\section{Problem Definition}
\subsection{Contrastive Loss Function}
Let ${X}=\left\lbrace \boldsymbol{x}_{i}\right\rbrace^N_{i=1} $ denote the unlabeled training dataset. Also given encoders $ f $ and $ g $, instance $ \boldsymbol{x}_{i} $ is mapped to a query feature$ \boldsymbol{f}_i=f\left( \boldsymbol{x}_{i}\right) $ and a corresponding key feature $ \boldsymbol{g}_i=g\left( \boldsymbol{x}_{i}\right) $ on hypersphere with augmentations, respectively. $ g $ maybe a weight-shared network of  $ f $ or a momentum-updated encoder. Assume that the generated query (anchor) set and key set are denoted by ${F}=\left\{\boldsymbol{f}_{i}\right\}_{i=1}^{N}$ and ${G}=\left\{\boldsymbol{g}_{i}\right\}_{i=1}^{K+1}$, respectively, where $N$ is batch size and $K$ denotes the negative size. Then, the InfoNCE loss of the instance $ \boldsymbol{x}_{i} $ can be formulated as:
\begin{equation}
	\label{CL loss}
	\mathcal{L}_{\boldsymbol{x}_i}=-\log \frac{\exp \left(\boldsymbol{f}_i^\mathrm{T}\boldsymbol{g}_i / \tau\right)}{\exp \left(\boldsymbol{f}_i^\mathrm{T} \boldsymbol{g}_i / \tau\right)+\sum_{j=1}^{K} \exp 	\left(\boldsymbol{f}_i^\mathrm{T} \boldsymbol{g}_j / \tau\right)},
\end{equation}
where $ \left\lbrace \boldsymbol{f}_i, \boldsymbol{g}_i\right\rbrace  $ is the positive pair of the $ i $-th instance, and $ \boldsymbol{g}_j $ denotes the negative sample from a distinct instance. Temperature parameter is $\tau$ and $\tau>0$. Negative pairs can also be incremental from the same-side encoder like NT-Xent \cite{chen2020simple}. The final total loss of an iteration is the mean value on the mini-batch: $\mathcal{L}=\sum_{i=1}^{N}\mathcal{L}_{\boldsymbol{x}_i}/N$.

\subsection{Hardness-aware Property}
Previous work identifies the important hardness-aware property via gradient analysis. For convenience, let $ \mathcal{P}_{ij} $ indicate the similarity between $ x_{i} $ and $ x_{j} $ after scaled by temperature $\tau$ and Softmax operation:
\begin{equation}
	\label{P}
	\mathcal{P}_{ij}=\frac{\exp \left(\boldsymbol{f}_i^\mathrm{T} \boldsymbol{g}_j / \tau\right)}{\exp \left(\boldsymbol{f}_i^\mathrm{T}\boldsymbol{g}_i / \tau\right)+\sum_{r=1}^{K} \exp \left(\boldsymbol{f}_i^\mathrm{T} \boldsymbol{g}_r / \tau\right)},
\end{equation}
Then the gradient w.r.t the anchor $ \boldsymbol{f}_i $ can be formulated as follows (more details are show in Appendix \ref{grad_form}):
\begin{equation}
	\label{Grad}
	\frac{\partial \mathcal{L}_{\boldsymbol{x}_i}}{\partial \boldsymbol{f}_i}=-\frac{\mathcal{W}_i}{\tau}\left(\boldsymbol{g}_i-\sum_{j=1}^{K}\hat{\mathcal{P}}_{ij}\cdot \boldsymbol{g}_j\right),
\end{equation}
where $\mathcal{W}_i=\sum_{j=1}^{K} \mathcal{P}_{ij}$ can be seen as a gradient sacling factor, and there exists $\hat{\mathcal{P}}_{ij}=\mathcal{P}_{ij} / \sum_{r=1}^{K} \mathcal{P}_{ij}$. It is worth noting $ \sum_{j=1}^{K} \hat{\mathcal{P}}_{ij}=1 $, in which $ \hat{\mathcal{P}}_{ij} $ indicates an hardness-aware property. It implies that InfoNCE automatically puts larger penalty weights on the hard negatives \cite{wang2021understanding}, which are higher similar to the anchor sample.

\subsection{Uniformity-Tolerance Dilemma}
The weight on the negative sample $ x_j $ is formulated as:
\begin{equation}
	\label{p_hat}
	\hat{\mathcal{P}}_{ij}=\frac{\exp \left(\boldsymbol{f}_i^\mathrm{T} \boldsymbol{g}_j / \tau\right)}{\sum_{r = 1}^K \exp \left(\boldsymbol{f}_i ^\mathrm{T} \boldsymbol{g}_r / \tau\right)}, \quad i \neq j,
\end{equation}
which is controlled by the temperature parameter \cite{wang2021understanding}. (1) As $\tau$ decreases, the shape of $ \hat{\mathcal{P}}_{ij} $ becomes sharper. Thus, a smaller temperature causes larger penalties on the high similarity region, which encourages the separation of embeddings but has less tolerance for FNs. (2) A larger temperature makes the shape of $ \hat{\mathcal{P}}_{ij} $ flatter, then tends to give all negative samples equal magnitude of penalties. In this case, the optimization process is more tolerant to FNs while concentrating less on uniformity.

\section{Model-Aware Temperature Strategy}
The existence of the uniformity-tolerance dilemma leads to suboptimal embedding space and performance degradation of downstream tasks \cite{wang2021understanding}. Selecting an ideal temperature may be helpful, but it is not easy to get that balance. Instead, considering that the fixity of temperature prevents InfoNCE from focusing both on uniformity and potentially semantic relationships, we design an adaptive strategy for contrastive learning to mitigate the challenge.

\subsection{Adaptive to Alignment}
\label{adapt_to_A}
The uniformity-tolerance dilemma is rooted in the unsupervised instance discrimination task. Intuitively, the discrimination of a model will be gradually improved along with the training process, then the high similarity region is more likely to contain FNs. A dynamic temperature that changes according to iterations might deal with UTD better. \textit{However, since the training iteration does not reflect the level of semantic confidence for a CL model, such temperature strategies are still rough and heuristic by now.} The more reasonable temperature adjustment strategy is needed to be investigated. What motivates us is the alignment property of the embedding space.

Alignment property is one of the most critical prior assumptions for instance discrimination \cite{wang2020understanding,Wu_2018_CVPR,ye2019unsupervised}. It means that the representations from a positive pair should have high similarity. Since there are no labels available, it is impossible for SSL to explicitly construct semantic guidance. Instead, different views of the same instance are exploited for self-supervised learning. Alignment represents the awareness of view-invariance of a CL model, which is the base for exploring semantically consistent samples. \citet{wang2020understanding} formulate the alignment loss as the expected distance of positive pairs:
    \begin{equation}
        \label{align loss}
        \mathcal{L}_\text{align} =\underset{\boldsymbol{x}_{i} \sim X}{\mathbb{E}}\left[\|f(\boldsymbol{x}_{i})-g(\boldsymbol{x}_{i})\|_{2}^{2}\right].
    \end{equation}
    
Another significant thing is that estimating the magnitude of alignment is not a computationally expensive operation. As shown in Eqn.(\ref{CL loss}), the calculation of sample similarities is a required step for CL loss, in which the part of positive pairs can be directly exploited for alignment. In this paper, we define the alignment magnitude $\mathcal{A}$ as the expected similarity of positive pairs. Hence, no additional structures and computations are needed. Here exists:
    \begin{equation}
    \label{align define}
        \begin{aligned}
            \mathcal{A}&=\underset{\boldsymbol{x}_{i} \sim X}{\mathbb{E}}\left[ f(\boldsymbol{x}_{i})^\mathrm{T} g(\boldsymbol{x}_{i})\right] \\
            &=1-\frac{1}{2}\underset{\boldsymbol{x}_{i} \sim X}{\mathbb{E}}\left[\|f(\boldsymbol{x}_{i})-g(\boldsymbol{x}_{i})\|_{2}^{2}\right].
        \end{aligned}
    \end{equation}
Thus, we have $\mathcal{A}=1-\mathcal{L}_\text{align}/2$ for alignment (detailed in Appendix \ref{align_proof}). $\mathcal{A}=1$ implies perfect alignment.
\subsection{Implementation Details}
Then the proposed alignment-adaptive temperature strategy is formulated as:
\begin{equation}
	\label{ta}
    \begin{aligned}
        \tau_a&=\tau_0+\alpha\left(\underset{\boldsymbol{x}_{i} \sim X}{\mathbb{E}}\left[ f(\boldsymbol{x}_{i})^\mathrm{T} g(\boldsymbol{x}_{i})\right]-\mathcal{A}_0\right)\tau_0\\
        &=\left[1+\alpha (\mathcal{A}-\mathcal{A}_0)\right]\tau_0,
    \end{aligned}
\end{equation}
where $ \alpha $ is a scaling factor and $\alpha\in\left[ 0,1\right]$. $\mathcal{A}_0$ is a initial threshold for alignment magnitude. On the unit hypersphere, $ \boldsymbol{f}_i^\mathrm{T} \boldsymbol{g}_i\in \left[ -1,1\right]  $, then $ \tau_a \in \left[ (1-\alpha-\alpha \mathcal{A}_0)\tau_0,(1+\alpha-\alpha \mathcal{A}_0)\tau_0\right] $. \textit{In particular, iff $\alpha=0$, the temperature degenerates to the ordinary fixed case.} $\tau_a$ will be detached by stop gradient operation. The above form ensures the temperature changes in a proper range. In fact, lots of variants can be explored, but being alignment-adaptive is the most important point.

Eqn.(\ref{ta}) shows that the $\tau_a$ is an increasing function of $\mathcal{A}$, enabling the temperature to be adaptive to the alignment magnitude of the CL model during training. Specifically, a smaller temperature works when the model is lacking training by heavily penalizing those HNs. For the better-trained stage, the improved alignment indicates the CL model is more discriminative for samples. Naturally, larger temperature parameters can relax the penalty strength on the high similarity region, where is more possible to exist FNs now.

The proposed strategy is a \textit{fine-grained adjusting approach}. As CL models are trained by sampling mini-batches, $ \mathcal{A}$ can be estimated within a batch to promptly adjust the temperature. Thus, $\tau_a$ automatically adapts to the model of $t$-th optimization iteration. Compared with the one that simply increases by epochs, our adaptive strategy is more online. Thus, the proposed method is a Model-Aware Contrastive Learning (MACL) strategy.

\section{Gradient Reduction Dilemma}
\label{Gradient Reduction}
With the above temperature strategy, the improved CL loss helps to escape UTD. However, the $\mathcal{P}_{ij}$ term also impedes efficient contrastive learning in another aspect. The problem is that CL models are typically trained with a large number $K$ of negative samples to achieve better performance, which is computationally demanding, especially for large batch sizes. Some recent work tries to address this problem by modifying InfoNCE loss \textit{but they each have their own opinions} \cite{yeh2022decoupled,zhang2022dual,chen2021simpler}, whereas we prove they fall into a similar solution targeting the gradient reduction dilemma, but also summarily propose a simple reweighting method.
\begin{figure}[ht]
	\center
	\includegraphics[width=6.5cm]{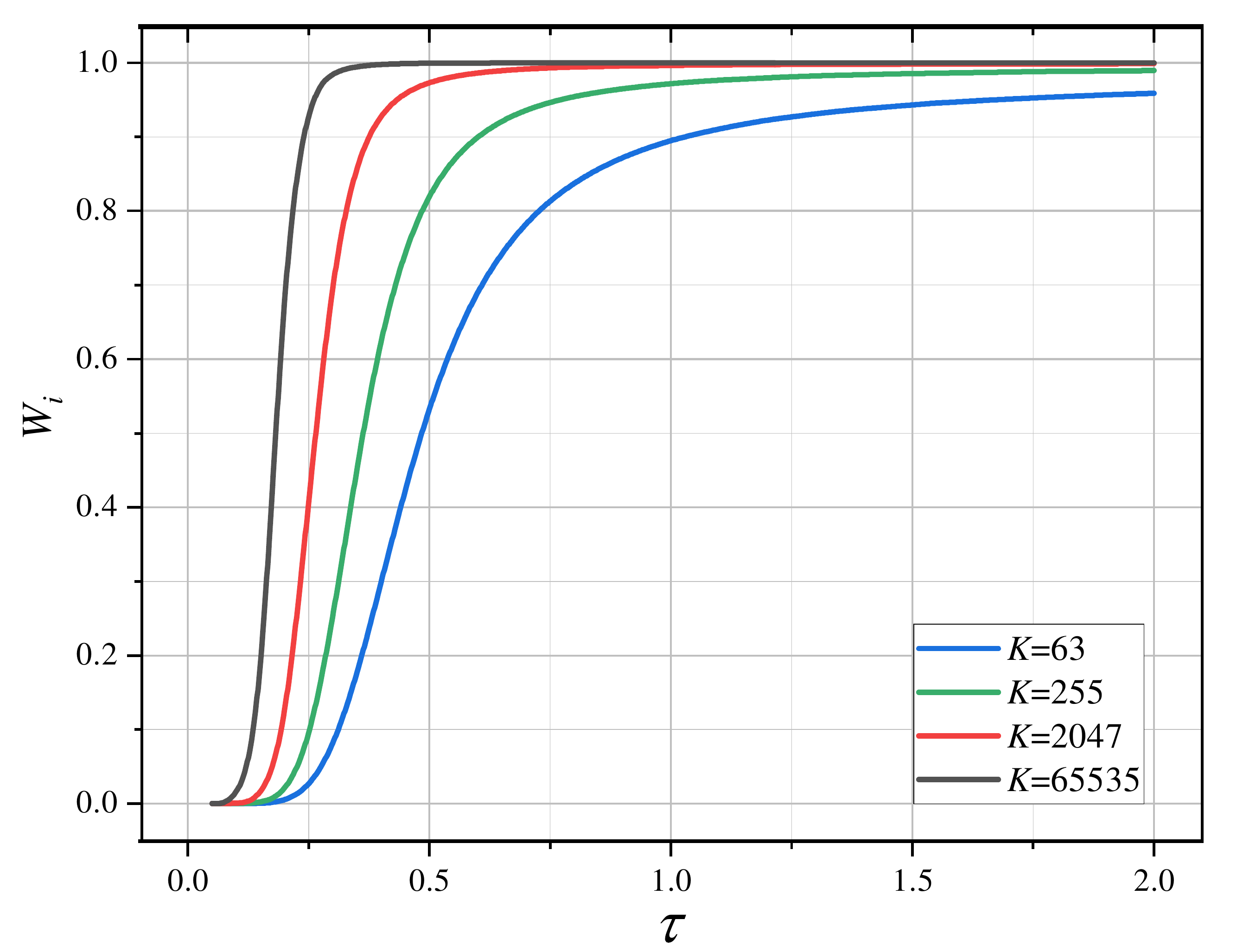}
	\caption{\textbf{Effect of the $ \tau $ and $K$} on the gradient scaling factor $ \mathcal{W}_i $.}
	\label{Wi}
\end{figure}
\subsection{Gradient Reduction Caused by the Sum Item}
The gradient scaling factor $ \mathcal{W}_i $ is a sum item of $ \mathcal{P}_{ij} $ in Eqn.(\ref{Grad}) and can also be described as:
\begin{equation}
	\mathcal{W}_i=  1-\frac{\exp \left(\boldsymbol{f}_i^\mathrm{T} \boldsymbol{g}_i / \tau\right)}{\exp \left(\boldsymbol{f}_i^\mathrm{T} \boldsymbol{g}_i / \tau\right)+\sum_{j=1}^{K} \exp 	\left(\boldsymbol{f}_i^\mathrm{T}\boldsymbol{g}_j / \tau\right)}.
\end{equation}
This item has small values for those easy positive pairs, which will reduce the gradient in Eqn.(\ref{P}) and has been mentioned in \cite{yeh2022decoupled}. Therefore, the gradient reduction problem will hinder the model learning, especially for those deeper units in low-precision ﬂoating-point training with the chain rule. In addition, a smaller $ K $ leads to a significant gradient reduction as there is an insufficient accumulation of negative similarities. This is the intuitive rationale that state-of-the-art CL models are often trained with a large number of negative samples.

From another aspect, $ \mathcal{W}_i $ is a monotonic function of $ \tau $. In particular, the shape of the sum item tends to become flat as temperature increases. We present an extreme example in Fig. \ref{Wi}, in which the similarities of the positive pair and negative pairs are set to 1 and -1, respectively. For these analyses, we have the following propositions (please check Appendix \ref{propo_proof} for proof details):
\begin{proposition}[Bound of gradient scaling factor w.r.t $ K $]
	\label{w bound1}
	Given the anchor feature $ \boldsymbol{f}_i $ and temperature $ \tau $, if $ K \rightarrow +\infty $, then $ \mathcal{W}_i $ approaches its upper bound 1. The limit is formulated as:
	\begin{equation}
		\lim _{K \rightarrow+\infty}\mathcal{W}_i = 1.
	\end{equation}
\end{proposition}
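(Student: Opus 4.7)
The plan is to recognize that $\mathcal{W}_i$ has the form $1 - A/(A+S_K)$, where $A := \exp(\boldsymbol{f}_i^\mathrm{T}\boldsymbol{g}_i/\tau)$ is a fixed positive constant (with $\boldsymbol{f}_i,\boldsymbol{g}_i$ and $\tau$ held fixed, independent of $K$) and $S_K := \sum_{j=1}^K \exp(\boldsymbol{f}_i^\mathrm{T}\boldsymbol{g}_j/\tau)$ is the accumulated negative-pair mass. Hence the claim $\mathcal{W}_i \to 1$ reduces to showing $S_K \to +\infty$, which forces the ratio $A/(A+S_K)$ to vanish.

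First, I would invoke the standard CL convention that embeddings live on the unit hypersphere, so $\|\boldsymbol{f}_i\| = \|\boldsymbol{g}_j\| = 1$. By the Cauchy–Schwarz inequality, every cosine similarity satisfies $\boldsymbol{f}_i^\mathrm{T}\boldsymbol{g}_j \ge -1$, which yields a uniform positive lower bound on each summand:
\begin{equation}
\exp\!\left(\boldsymbol{f}_i^\mathrm{T}\boldsymbol{g}_j/\tau\right) \;\ge\; \exp(-1/\tau) \;>\; 0,
\end{equation}
since $\tau>0$. Summing over $j=1,\dots,K$ gives $S_K \ge K\exp(-1/\tau)$, which diverges as $K\to+\infty$ because $\exp(-1/\tau)$ is a strictly positive constant.

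Second, I would conclude by the squeeze argument: since $A>0$ is fixed and $S_K\to+\infty$,
\begin{equation}
0 \;\le\; \frac{A}{A+S_K} \;\le\; \frac{A}{S_K} \;\longrightarrow\; 0,
\end{equation}
so $\mathcal{W}_i = 1 - A/(A+S_K) \to 1$. Combined with the elementary fact that $\mathcal{W}_i = \sum_{j=1}^K \mathcal{P}_{ij} < 1$ for every finite $K$ (since the positive term $\mathcal{P}_{ii}$ takes strictly positive mass in the softmax), this also justifies that the limit $1$ is approached from below and serves as a genuine upper bound.

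The main obstacle is honestly quite mild: one only needs to nail down the unit-sphere normalization so that the lower bound $\exp(-1/\tau)$ is uniform in $j$. Without normalization, adversarially chosen negatives with $\boldsymbol{f}_i^\mathrm{T}\boldsymbol{g}_j \to -\infty$ could conceivably make $S_K$ summable. With normalization, however, the bound is tight and the divergence of $S_K$ is immediate, so no further care (e.g., assumptions on the distribution of negatives) is required.
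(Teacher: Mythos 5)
Your proof is correct and follows essentially the same route as the paper: both arguments reduce to showing the accumulated negative mass in the denominator grows without bound, then squeeze $\mathcal{W}_i$ toward $1$ while noting strict positivity of the exponentials keeps $\mathcal{W}_i<1$ for finite $K$. If anything, your uniform lower bound $\exp(-1/\tau)$ is slightly cleaner than the paper's use of $E_{\min}$, which formally depends on $K$ and only works because the same hypersphere bound holds uniformly.
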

\begin{proposition}[Bound of gradient scaling factor w.r.t $ \tau $]
	\label{w bound2}
	Given $ \boldsymbol{f}_i $ and key set $ G $, $\mathcal{W}_i$ monotonically changes with respect to $ \tau$. The monotonicity is determined by the similarity distribution of samples. If $ \tau \rightarrow +\infty $, then $ \mathcal{W}_i $ approaches its bound $ K/(K+1) $, formulated as:
	\begin{equation}
			\lim _{\tau \rightarrow+\infty}\mathcal{W}_i = \frac{K}{1+K}.
	\end{equation}
\end{proposition}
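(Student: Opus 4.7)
The plan is to rewrite $\mathcal{W}_i$ in a form that makes the $\tau$-dependence transparent, and then separately dispatch the limit and the monotonicity claim. Abbreviating $s_0 := \boldsymbol{f}_i^\mathrm{T}\boldsymbol{g}_i$ and $s_j := \boldsymbol{f}_i^\mathrm{T}\boldsymbol{g}_j$ for $j=1,\ldots,K$, I would divide numerator and denominator of the positive-pair softmax by $e^{s_0/\tau}$ to obtain
$$\mathcal{W}_i \;=\; 1 - \frac{1}{1 + R(\tau)}, \qquad R(\tau) \;:=\; \sum_{j=1}^{K} e^{(s_j - s_0)/\tau}.$$
Since $r \mapsto 1 - 1/(1+r)$ is strictly increasing for $r > 0$, every qualitative question about $\mathcal{W}_i$ reduces to the corresponding question for $R(\tau)$.

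For the limit, I would simply note that with $\boldsymbol{f}_i$ and the key set $G$ held fixed, all similarities are bounded, so each exponent $(s_j - s_0)/\tau$ tends to $0$ as $\tau \to +\infty$ and consequently $R(\tau) \to K$. Substituting gives $\mathcal{W}_i \to 1 - 1/(K+1) = K/(K+1)$, which is the easy half of the proposition.

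The monotonicity would come from differentiating $R$ term by term:
$$\frac{dR}{d\tau} \;=\; -\frac{1}{\tau^2}\sum_{j=1}^{K} (s_j - s_0)\, e^{(s_j - s_0)/\tau},$$
so $d\mathcal{W}_i/d\tau$ has the opposite sign to $\sum_j (s_j - s_0) e^{(s_j - s_0)/\tau}$. This is where the clause \emph{the monotonicity is determined by the similarity distribution} enters: whenever every negative satisfies $s_j < s_0$ (the expected regime for a CL model trained with meaningful alignment), all summands are negative and $\mathcal{W}_i$ is strictly increasing in $\tau$; symmetrically, if every $s_j > s_0$, then $\mathcal{W}_i$ is strictly decreasing. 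The extreme configuration in Figure \ref{Wi} ($s_0=1$, $s_j=-1$) gives $R(\tau)=K e^{-2/\tau}$, which sits in the strictly increasing branch and approaches $K/(K+1)$ monotonically from below, matching the plotted curves.

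The main obstacle I anticipate is the mixed-sign case: if some negatives exceed $s_0$ while others fall below it, the signed sum above has no a priori fixed sign and can in fact change sign with $\tau$, ruling out an unconditional global monotonicity statement. I would therefore frame the proof of Proposition \ref{w bound2} as a conditional result — strict monotonicity under the natural sign hypothesis on $\{s_j - s_0\}$, with the direction read off from that hypothesis — coupled with the unconditional limit $K/(K+1)$. This matches the intended use of the proposition in the subsequent discussion of the gradient-reduction dilemma, where the well-trained alignment regime $s_0 \approx 1$ places us squarely in the monotone-increasing branch.
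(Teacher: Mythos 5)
Your proof is correct and follows essentially the same route as the paper's: the limit comes from each exponent $(s_j-s_0)/\tau\to 0$ (equivalently each $E_k\to 1$), and the monotonicity is read off from the sign of $\sum_j (s_0-s_j)e^{(s_j-s_0)/\tau}$, which coincides with the paper's factor $\sum_j (s_i-s_r)E_r$ up to the positive multiplier $E_i$. Your explicit caveat that the mixed-sign case rules out an unconditional global monotonicity claim is a sharper formulation than the paper's informal ``poor vs.\ good embeddings'' dichotomy and is worth retaining.
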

\subsection{Discussion about Previous Studies}
These explorations show that the gradient reduction dilemma can be addressed by increasing the number of negative keys or adopting an extra large temperature for $ \mathcal{W}_i $. More specifically, sampling more negative keys helps to promote the accumulation of the exponential similarities and then inhibit $ \mathcal{W}_i $ too small. This is one of the reasons that most InfoNCE-based CL methods benefit from a large $K$, whether a big batch \cite{chen2020simple,dwibedi2021little} or a large dictionary size \cite{he2020momentum,tian2020contrastive}. In another case, adopting a larger separate temperature makes $ \mathcal{W}_i $ approach its bound then also improves this issue, which is the key of \cite{zhang2022dual}. Additionally, DCL \cite{yeh2022decoupled} removes the positive term from the denominator, then the corresponding gradient does not include $\mathcal{W}_i$ anymore. FlatNCE \cite{chen2021simpler} has exactly the same gradient expression with DCL, thus it is also feasible. We will recall their relations and provide some experimental evidence in Sec. \ref{rew_discu}.
\subsection{Reweighting InfoNCE with Upper Bound}
The above analysis essentially explains why some previous work is experimentally effective. We also design an approach for the gradient reduction issue when learning with small negative sizes, which is formulated as follows:
\begin{equation}
	\label{loss_m}
	\begin{aligned}
		\mathcal{L}_{\boldsymbol{x}_i}^{M}&=-\mathcal{V}_i\cdot\log \frac{\exp ( \boldsymbol{f}_i^\mathrm{T} \boldsymbol{g}_i / \tau_a) }{\sum_{j=1}^{K+1} \exp ( \boldsymbol{f}_i^\mathrm{T} \boldsymbol{g}_j /  \tau_a)},
	\end{aligned}
\end{equation}
where $\mathcal{V}_i=\operatorname{sg}\left[{1}/{\mathcal{W}_i}\right]$ and $\operatorname{sg}\left[\cdot\right]$ is the stop gradient operation to maintain the basic assumptions of InfoNCE, which is commonly used and finished by \texttt{detach} in code. In this case, the $ \mathcal{W}_i $ item is rearranged with 1 for the small $K$ cases Eqn.(\ref{Grad}), assigned to its upper bound directly. A simple example pseudocode of Eqn.(\ref{loss_m}) is shown as Algorithm \ref{sim_code}.

\begin{algorithm}[H]
	\caption{Pseudocode of MACL in a PyTorch-like style.}
	\label{sim_code}
	\definecolor{codeblue}{rgb}{0.25,0.5,0.5}
    \definecolor{codekw}{rgb}{0.85, 0.18, 0.50}
    \newcommand{\algofontsize}{8.0pt}
    \lstset{
      backgroundcolor=\color{white},
      basicstyle=\fontsize{\algofontsize}{\algofontsize}\ttfamily\selectfont,
      columns=fullflexible,
      breaklines=true,
      captionpos=b,
      commentstyle=\fontsize{\algofontsize}{\algofontsize}\color{codeblue},
      keywordstyle=\fontsize{\algofontsize}{\algofontsize}\color{codekw},
    }
	\centering
\begin{lstlisting}[language=python]
# pos: positive similarities, Nx1
# neg: negative similarities, NxK
# t_0: basic temperature
# a: scaling factor
# A_0: initial alignment threshold
    
def MACL(pos, neg, t_0, a, A_0):
    
    # model-aware temperature
    A = torch.mean(pos.detach())
    t = t_0 * (1 + a * (A - A_0))
    
    logits = torch.cat([pos, neg], dim=1)
    P = torch.softmax(logits/t, dim=1)[:, 0]
    
    # reweighting the loss
    V = 1 / (1 - P)
    loss = -V.detach() * torch.log(P)
    
    return loss.mean()
\end{lstlisting}
\end{algorithm}

\renewcommand{\arraystretch}{0.85}

\section{Empirical Study}
In this section, we empirically evaluate the proposed strategy for enhancing CL performance in different cases. To demonstrate the general improvement, experiments are mainly implemented on the learning of images, but also include sentences and graph representations.
\subsection{Experiments on Image Representation}
\label{img_exp}
We mainly experiment on the ImageNet ILSVRC-2012 (i.e., ImageNet-1K) \cite{deng2009imagenet} and use standard ResNet-50 \cite{he2016deep} as image encoders. CIFAR-10 \cite{krizhevsky2009learning}  and the subset ImageNet-100 \cite{tian2020contrastive} are also considered. We choose SimCLR \cite{chen2020simple} as the baseline but also perform some MoCo v2 \cite{chen2020improved} evaluations. They use InfoNCE (or NT-Xent) as the basic schedule and are representative of mainstream frameworks, sampling negatives within mini-batches, from a momentum queue, respectively. We strictly follow their settings, augmentations, and linear evaluation protocol or reproduce under the same standard. As such, comparisons are solely on loss function impact. Details are laid out in Appendix \ref{Setup Details}.
\begin{figure}[htb]
	\center
	\includegraphics[width=8cm]{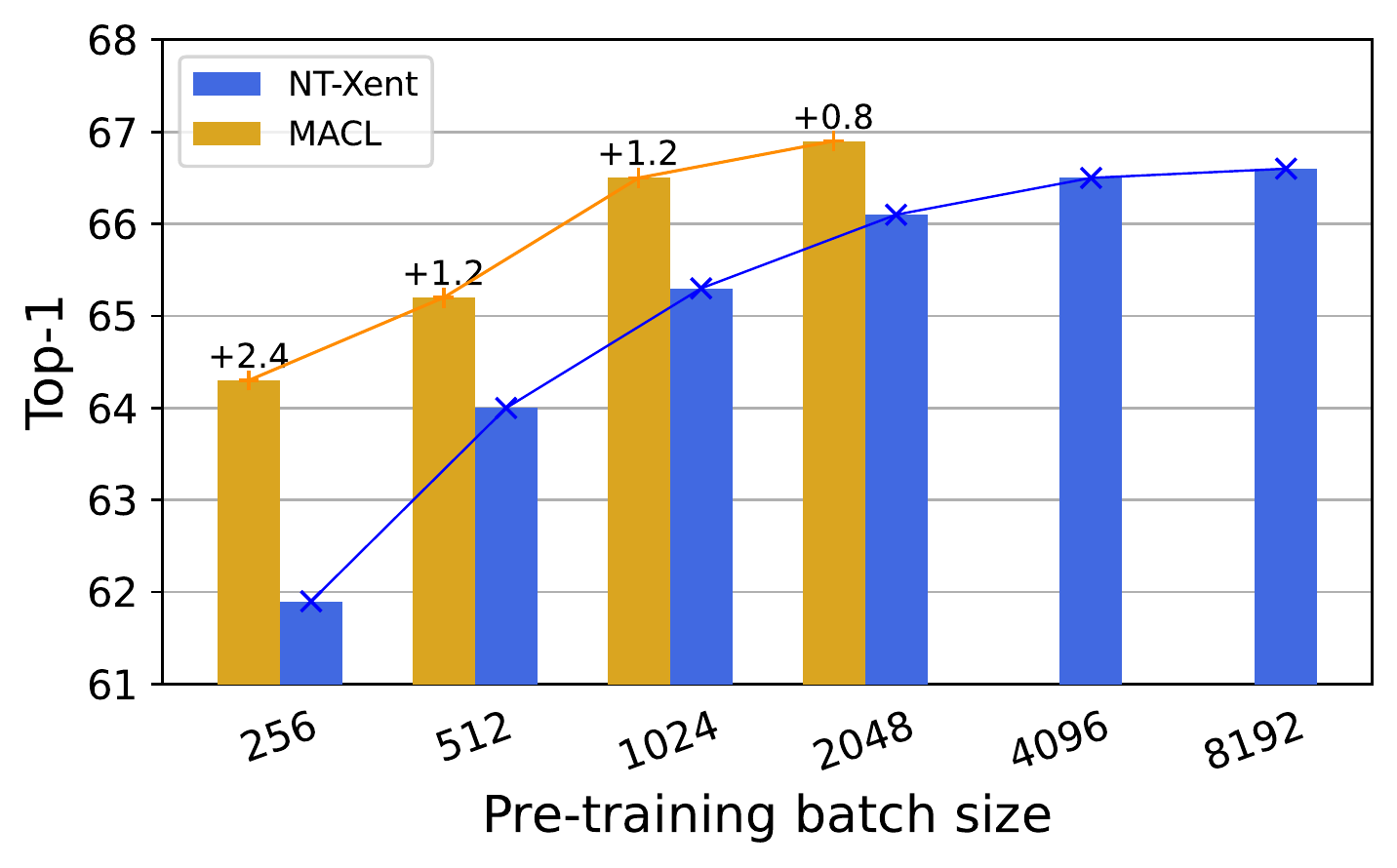}
	\caption{\textbf{Effect of batch sizes} (top-1 linear evaluation accuracies on ImageNet-1K with 200-epoch pre-training). Numbers on the top of bars are absolute gains of MACL under same settings.}
	\label{simclr_bs}
\end{figure}

\label{neg_size}
\textbf{Effect of Negative Sizes}\quad First, we compare MACL against vanilla CL loss for negative sizes. Figure \ref{simclr_bs} recapitulates the results of SimCLR and MACL with batch sizes from 256 to 2048. From these linear evaluation scores, we can see that encoders trained with MACL significantly outperform the vanilla versions (NT-Xent) under all the negative sizes, and the accuracy under 256-batch size is higher than the 512 one of the counterpart. In fact, \textit{our accuracy 66.5\% under 1024-batch size is on par with the original 8192 one (66.5\% vs 66.6\%)}, which indicates the effectiveness for MACL strategy to escape the dilemmas.

Affected by the gradient reduction problem discussed in Sec. \ref{Gradient Reduction}, SimCLR has a 4.2\% drop from batch size of 2048 to 256. With MACL, the trained encoders are less sensitive to batch size as have a smaller corresponding 2.6\% drop, and have higher improvement under smaller batch sizes. Besides, comparisons and discussion of queue size with MACL and InfoNCE on ImageNet-100 with MoCo v2 are reported in Appendix \ref{queue_size_exp}. These results suggest the rationality of the gradient reduction dilemma analysis and reweighting approach for alleviating it.

\textbf{Robustness to Training Length}\quad We conduct longer training with MACL, and the linear classification accuracies are shown in Table \ref{in1k_epoch}. There are some observations. First, MACL benefits from longer training length, which is consistent with vanilla contrastive loss. Moreover, MACL-based results are significantly better than ordinary ones. Our 200 and 400 epochs accuracies based on SimCLR are even comparable to the original ones with twice epochs (400 and 800), which demonstrates the learning efficiency brought by MACL. This also validates the advantage of MACL for dealing with the underlying dilemmas in InfoNCE.
\begin{table}[htb]
\setlength{\tabcolsep}{11pt}
  \centering
  \caption{\textbf{Effect of training lengths} (top-1 linear evaluation accuracies on ImageNet1K with 256-batch size pre-training).}
    \scalebox{0.89}{\begin{tabular}{lccc}
    \toprule[1.2pt]
    \multicolumn{1}{l}{Epoch} & 200   & 400   & 800 \\
    \midrule
    SimCLR       & 61.9 & 64.7  & 66.6 \\
    \textbf{w/ MACL}  & \textbf{64.3}(\textcolor[RGB]{116,0,0}{+2.4}) &  \textbf{66.3}(\textcolor[RGB]{116,0,0}{+1.6})    & \textbf{68.1}(\textcolor[RGB]{116,0,0}{+1.5}) \\
    \bottomrule[0.9pt]
    \end{tabular}}
  \label{in1k_epoch}%
\end{table}

\textbf{Transfer to Object Detection}\quad We evaluate representations learned by MACL on downstream detection task. We use VOC07+12 \cite{everingham2010pascal} to finetune the encoders of SimCLR and MACL, then test models on VOC2007 test benchmark. Scores in Table \ref{simclr_detect} indicate that MACL strategy can provide better performance in terms of mean average precision metrics, demonstrating its effectiveness for learning transferable representations to detection.

\begin{table}[htb]
\setlength{\tabcolsep}{11pt}
  \centering
  \caption{\textbf{Transfer to object detection} on VOC07+12 using Faster R-CNN with C4-backbone and 1$\times$ schedule. Encoders are trained with batch size of 256.}
    \scalebox{0.89}{\begin{tabular}{lccc}
    \toprule[1.2pt]
    Pre-train & AP$_\text{all}$  & AP$_\text{50}$  & AP$_\text{75}$ \\
    \midrule
    SimCLR   & 49.7  & 79.4  & 53.6 \\
    \textbf{w/ MACL} & \textbf{50.1}(\textcolor[RGB]{116,0,0}{+0.4})  & \textbf{79.7}(\textcolor[RGB]{116,0,0}{+0.3})  & \textbf{53.7}(\textcolor[RGB]{116,0,0}{+0.1}) \\
    \bottomrule[0.9pt]
    \end{tabular}}
  \label{simclr_detect}%
\end{table}

\subsection{Experiments on Sentence Embedding}
We adopt SimCSE \cite{gao2021simcse} as the baseline in this part, which successfully facilities sentence embedding learning with contrastive learning framework using InfoNCE. The datasets and setups of training and evaluation follow the original literature and are detailed in Appendix \ref{simcse_setting}. Results under RoBERTa \cite{liu2019roberta} backbone are reported in Table \ref{roberta_sts_trans}, and BERT \cite{kenton2019bert} scores are listed in Appendix Table \ref{bert_sts_trans}.

\textbf{Performance on STS Tasks}\quad We conduct seven semantic textual similarity (STS) tasks to evaluate the capability of sentence embedding following \cite{gao2021simcse}. The results are measured by Spearman’s correlation. For both models with RoBERT and BERT backbones, those trained with the MACL strategy achieve better performance on 6 of 7 STS tasks. Additionally, there are also noticeable gains w.r.t the average score. With the help of MACL, the learned embeddings are able to boost the clustering of semantically similar sentences.
\begin{table}[htb]
\setlength{\tabcolsep}{2.3pt}
  \centering
  \caption{\textbf{STS and transfer tasks comparisons} of sentence embeddings with RoBERTa encoder. }
    \scalebox{0.8}{\begin{tabular}{ccccccccc}
    \toprule[1.2pt]
    STS task & STS12 & STS13 & STS14 & STS15 & STS16 & STSB  & SICKR  \\
    \midrule
    SimCSE & 70.16  & \textbf{81.77 } & 73.24  & 81.36  & 80.65  & 80.22  & 68.56   \\
    \multirow{2}{*}{\textbf{w/ MACL}} & \textbf{70.76 } & 81.43  & \textbf{74.29 } & \textbf{82.92 } & \textbf{81.86 } & \textbf{81.17 } & \textbf{70.70 } \\
     & (\textcolor[RGB]{116,0,0}{+0.60}) & (\textcolor[RGB]{34,139,34}{-0.34}) & (\textcolor[RGB]{116,0,0}{+1.05}) & (\textcolor[RGB]{116,0,0}{+1.56}) &(\textcolor[RGB]{116,0,0}{+1.21}) & (\textcolor[RGB]{116,0,0}{+0.95}) & (\textcolor[RGB]{116,0,0}{+2.14}) \\
    \midrule
    \midrule
    Transfer task & MR    & CR    & SUBJ  & MPQA  & SST2  & TREC  & MRPC  \\
    \midrule
    SimCSE & 81.04  & 87.74  & 93.28  & 86.94  & 86.60  & 84.60  & 73.68  \\
    \multirow{2}{*}{\textbf{w/ MACL}} & \textbf{82.32 } & \textbf{88.03 } & \textbf{93.51 } & \textbf{87.92 } & \textbf{87.81 } & \textbf{85.80 } & \textbf{75.54 } \\
     & (\textcolor[RGB]{116,0,0}{+1.28}) & (\textcolor[RGB]{116,0,0}{+0.29}) & (\textcolor[RGB]{116,0,0}{+0.23}) & (\textcolor[RGB]{116,0,0}{+0.98}) & (\textcolor[RGB]{116,0,0}{+1.21}) & (\textcolor[RGB]{116,0,0}{+1.20}) & (\textcolor[RGB]{116,0,0}{+1.86}) \\
    \bottomrule[0.9pt]
    \end{tabular}}
  \label{roberta_sts_trans}%
\end{table}

\textbf{Performance on Transfer Tasks}\quad We further investigate transfer tasks following \cite{gao2021simcse} to verify the superiority of transferring to downstream settings. A logistic regression classifier is trained on top of the frozen pre-trained models. From the exhibited evaluation scores, it can be observed that the model trained with MACL achieves superior results on all the tasks and obtain 1.01$\%$ gain w.r.t the average score. In the BERT context, our MACL strategy outperforms on 5/7 tasks over the original SimCSE and also shows superiority in the average score. More experimental details are described in Appendix \ref{simcse_setting}. Results both on STS and transfer tasks fully suggest that the proposed MACL strategy provides higher quality representations, then gives considerable improvement for sentence embedding learning.

\begin{table}[htb]
\setlength{\tabcolsep}{10pt}
  \centering
  \caption{\textbf{Downstream classification accuracies in graph representation learning} on different datasets.}
    \scalebox{0.89}{\begin{tabular}{cccc}
    \toprule[1.2pt]
    Dataset & NCI1  & PROTEINS & MUTAG \\
    \midrule
    GraphCL & 77.87±0.41 & 74.39±0.45 & 86.80±1.34 \\
    \textbf{w/ MACL} & \textbf{78.41±0.47} & \textbf{74.47±0.85} & \textbf{89.04±0.98} \\
    \midrule
    \midrule
    Dataset & RDT-B & DD    & IMDB-B \\
    \midrule
    GraphCL & 89.53±0.84 & 78.62±0.40 & 71.14±0.44 \\
    \textbf{w/ MACL} & \textbf{90.59±0.36} & \textbf{78.80±0.66} & \textbf{71.42±1.05} \\
    \bottomrule[0.9pt]
    \end{tabular}}
  \label{graph cls}
\end{table}
\subsection{Experiments on Graph Representation}
To evaluate on graph representation learning, we choose GraphCL \cite{you2020graph} as the baseline and compare MACL against ordinary CL loss on various benchmarks. The pre-training and evaluation settings are the default of GraphCL detailed in Appendix \ref{graph_trans}.

\textbf{Downstream Classification}\quad For the graph classification task, we conduct experiments on six commonly used benchmarks \cite{Morris+2020}. They are denser or not-so-dense and cover areas of the social network, bioinformatics data, molecules data, etc. GNN-based encoders are the same in \cite{chen2019powerful}. Methods are trained with contrastive strategies, and the generated graph embeddings are fed into a downstream SVM classiﬁer then reporting the mean and standard deviation values of five times following \cite{you2020graph}. As the scores shown in Table \ref{graph cls}, our MACL strategy enables the framework to achieve better or comparable performance on these six different-scale (number of average nodes) datasets belonging to distinct fields.

\begin{table}[htb]
\setlength{\tabcolsep}{3pt}
  \centering
  \caption{\textbf{Transfer learning comparisons of graph representation learning} on different datasets.}
  \scalebox{0.89}
    {\begin{tabular}{ccccccccc}
    \toprule[1.2pt]
    Dataset & Tox21 & BBBP  & ToxCast & SIDER \\
    \midrule
    GraphCL & 73.87±0.66 & \textbf{69.68±0.67} & 62.40±0.57 & 60.53±0.88 \\
    \textbf{w/ MACL}  & \textbf{74.39±0.29} & 67.98±0.97 & \textbf{62.96±0.28} & \textbf{61.46±0.39} \\
    \midrule
    \midrule
    Dataset & ClinTox & MUV   & HIV   & BACE \\
    \midrule
    GraphCL & 75.99±2.65 & 69.80±2.66 & \textbf{78.47±1.22} & 75.38±1.44 \\
    \textbf{w/ MACL} & \textbf{78.13±4.29} & \textbf{72.77±1.25} & 77.56±1.12 & \textbf{76.07±0.90} \\
    \bottomrule[0.9pt]
    \end{tabular}}
  \label{graph_transfer}
\end{table}

\textbf{Transfer to Chemistry Data}\quad Transfer learning comparisons are also considered. We experiment on molecular property prediction in chemistry following \cite{you2020graph, hu2020strategies}. Pre-trains and ﬁnetunes are in different datasets \cite{wu2018moleculenet}. Models trained with MACL outperform original GraphCL on 6 of 8 datasets in Table \ref{graph_transfer}. Both the downstream classification task and transfer learning results illustrate that MACL can boost representations with better generalizability and transferability, which further verifies the general improvement for vanilla CL loss.

\subsection{Ablations}
We present ablations of the proposed approach in this section to further understand its effectivity. Unless otherwise stated, settings are the same as those in Sec. \ref{img_exp}.

\begin{table}[htb]
\setlength{\tabcolsep}{5pt}
  \centering
  \caption{\textbf{Explorations of loss function}. Numbers are top-1 linear evaluation accuracies on ImageNet-1K with 200-epoch pre-training under 512-batch size. LR-s denotes the smaller learning rate case under the ordinary schedule, and LR-l is the larger case.}
    \scalebox{0.89}{\begin{tabular}{ccc|cc}
    \toprule[1.2pt]
    case  & Adaptive & Reweighting & LR-s & LR-l \\
    \hline
    Baseline & \ding{56}  & \ding{56} & 64.0  & 65.6 \\
    \hline
    (a)   & \ding{52}   & \ding{56}   & 64.9(\textcolor[RGB]{116,0,0}{+0.9})  & 67.5(\textcolor[RGB]{116,0,0}{+1.9}) \\
    (b)   & \ding{56}   & \ding{52}   & 65.0(\textcolor[RGB]{116,0,0}{+1.0})  & 67.8(\textcolor[RGB]{116,0,0}{+2.2}) \\
    (c)   & \ding{52}   & \ding{52}   & 65.2(\textcolor[RGB]{116,0,0}{+1.2})  & 68.1(\textcolor[RGB]{116,0,0}{+2.5}) \\
    \bottomrule[0.9pt]
    \end{tabular}}
  \label{loss_lr_abla}
\end{table}

\textbf{Loss Function Ablations}\quad To test the necessity of major components, we alter the loss function present in Eqn.(\ref{loss_m}) and validate encoders trained by variants. Linear evaluation scores are listed in Table \ref{loss_lr_abla} (the column of LR-s case). First, we can see that removing the adaptive temperature or reweighting operation leads to an accuracy drop compared to the full version. On the other hand, the model-aware adaptive method is designed to alleviate the performance degradation caused by the uniformity-tolerance dilemma. Utilizing this strategy in isolation yields a performance spike over the baseline. Since reweighting is designed to verify and improve the gradient reduction dilemma, only using this operation also achieves better performance. These observations support our motivation and designs. \citet{chen2020simple} show that SimCLR with a different learning rate schedule can improve the performance for models trained with small batch sizes and in smaller number of epochs. Interestingly, our MACL shows even higher improvement using a larger learning rate, which is present in the LR-l case in Table \ref{loss_lr_abla}. More discussions are in Appendix \ref{abla2}.

\begin{table}[htb]
\setlength{\tabcolsep}{2.5pt}
  \centering
  \caption{\textbf{Ablation comparisons on ImageNet-100 with  SimCLR framework} (linear evaluation accuracies with 200-epoch pre-training and batch size of 256).}
  \scalebox{0.89}
    {\begin{tabular} {c|cccc}
    \toprule[1.2pt]
    \multirowcell{2}{\centering Config} &{\multirowcell{2}{\centering NT-Xent}} &{\multirowcell{2}{\centering DCL}} & \multicolumn{2}{c}{\textbf{MACL}} \\
    \cline{4-5}
    & & & w/ adaptive   & w/o adaptive \\
    \hline
    Acc. & 75.54 / 93.06 & 77.38 / 94.01 & \textbf{78.28 / 94.25} & 77.32 / 94.03  \\
    \bottomrule[0.9pt]
    \end{tabular}}
  \label{alation_in100}%
\end{table}

For another, we compare MACL with NT-Xent and DCL in Table \ref{alation_in100}. When $\alpha$ is set to 0, the temperature reduces to the fixed case, and only reweighting works. We can see that the top-1 score has a 1.78 gain over NT-Xent using reweighting in isolation and is on par with DCL, which supports the correctness of our judgment about gradient reduction dilemma. Besides, when equipped with the adaptive temperature, MACL obtains a further 0.96 improvement.
\begin{figure}[htb]
	\center
	\subfigure[NT-Xent]{
		\label{NTXent_tsne}
		\includegraphics[width=3.8cm]{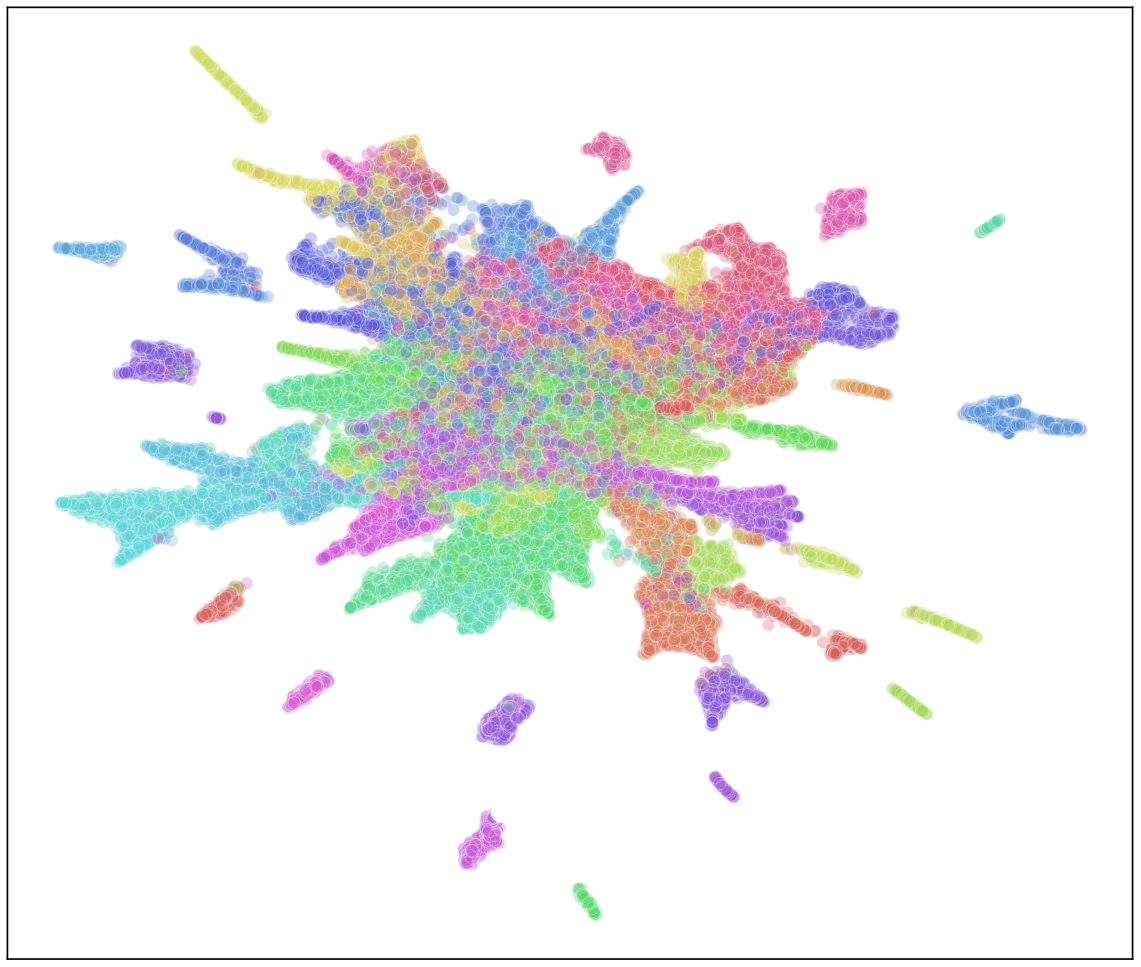}}
	\subfigure[MACL]{
		\label{macl_tsne}
		\includegraphics[width=3.8cm]{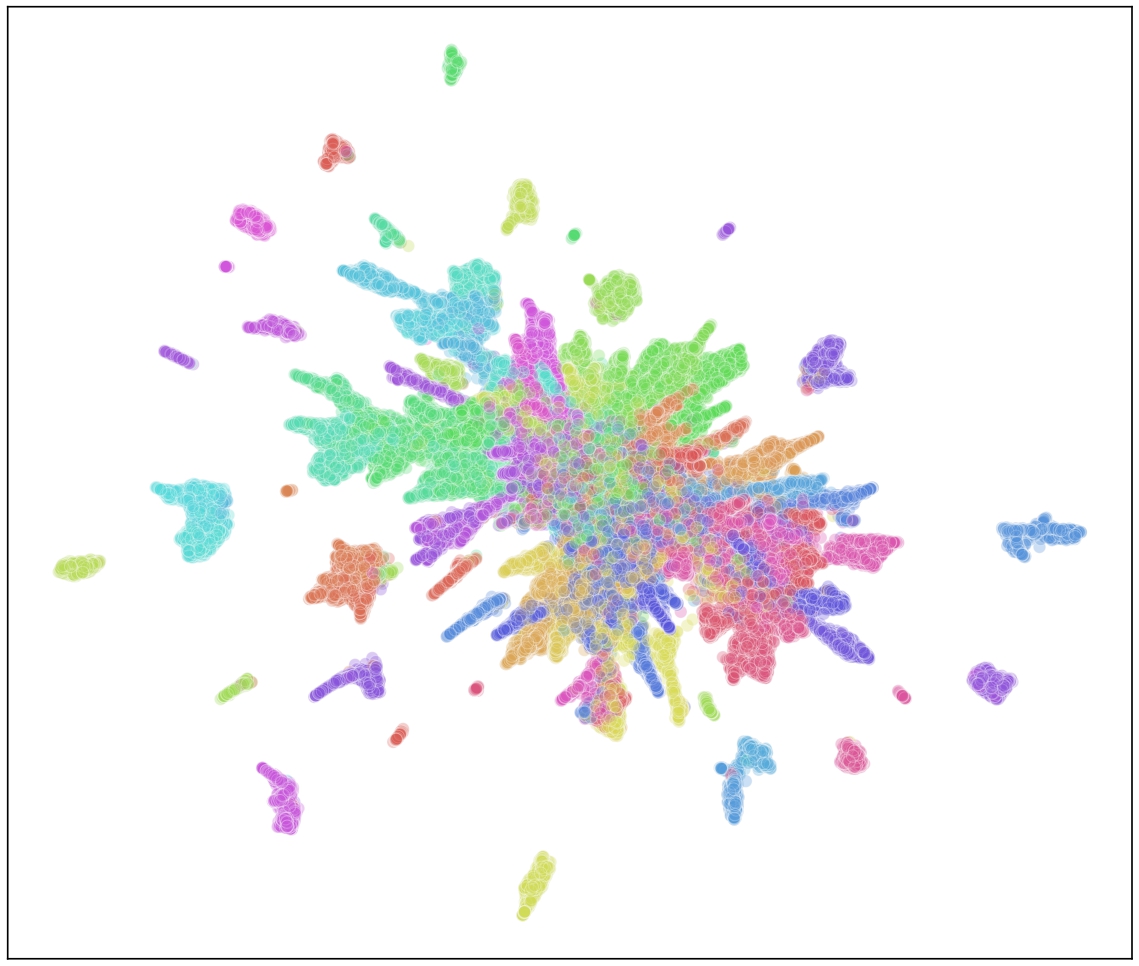}}
	\caption{\textbf{UMAP visualization comparison} on ImageNet-100. ResNet-50 encoders are pre-trained 200 epochs under the batch size of 256 with NT-Xent and MACL, respectively. There are 100 colors indicating 100 semantic categories.}
	\label{umap_vis}
\end{figure}

\textbf{More Ablations}\quad We have already presented some ablations in the former experimental sections. From Figure  \ref{simclr_bs} and Table \ref{in1k_epoch}, MACL exhibits significantly better robustness with respect to negative size and training length. We further present the UMAP visualization \cite{mcinnes2018umap} of features generated by encoders trained with our MACL and vanilla NT-Xent loss in Figure \ref{umap_vis}. Figure \ref{macl_tsne} exhibits better separability in the central area under the same training length, which indicates the learning efficiency and higher embedding quality brought by our approach.

\begin{table}[htb]
\setlength{\tabcolsep}{4pt}
  \centering
  \caption{\textbf{Parameter analysis} for MACL strategy (linear evaluation accuracies of 200-epoch and 256-batch size pre-training on ImageNet-100 with SimCLR). The underlined configs are set to be fixed when the other are selected to be variables.}
    \scalebox{0.85}{\begin{tabular}{ccccc}
    \toprule[1.2pt]
    $\tau_0$  & 0.05  & \underline{0.1}   & 0.5   & 1 \\
    \midrule
    Acc. & 76.68 / 93.46 & \textbf{78.14 / 94.16} & 69.72 / 91.71 & 61.72 / 87.28 \\
    \midrule
    \midrule
    $\alpha$ & 0     & 0.1   & \underline{0.5}   & 1 \\
    \midrule
    Acc. & 77.32 / 94.03   & 77.18 / 94.06      & \textbf{78.14 / 94.16} & 77.74 / 94.14 \\
    \midrule
    \midrule
    $\mathcal{A}_0$    & \underline{0}     & 0.2   & 0.6   & 0.8 \\
    \midrule
    Acc. & 78.14 / 94.16  & \textbf{78.28 / 94.25}  & 78.1 / 94.14      & 77.54 / 93.95 \\
    \bottomrule[0.9pt]
    \end{tabular}}
  \label{parameters}
\end{table}
\textbf{Parameter analysis}\quad To better understand MACL as well as its parameters, we conduct experiments on and the scores are listed in Table \ref{parameters}. Since $\tau_0$ is the datum point, it is essential to the performance of contrastive learning. Though the model is less sensitive to $\alpha$ and $\mathcal{A}_0$, they play an important role in adjusting the final temperature, yielding performance improvement with proper settings. The discussions of role of each parameter as well as the sensitivity analysis and ablations of NNCLR are present in Appendix \ref{simclr_in100}.

\section{Discussion}
\subsection{Relations to Recent Temperature Schemes}
Besides our alignment-adaptive strategy for addressing uniformity-tolerance dilemma, there are some interesting CL temperature schemes explored for different motivations. \citet{zhang2021temperature} aim to learn temperature as the uncertainty of embeddings for the out-of-distribution task. A dynamic multi-temperature method is proposed in \cite{khaertdinov2022dynamic} to scale instance-specific similarities in the Human Activity Recognition. The most recent \cite{kukleva2023temperature} designs temperature as a cosine variation with epoch to improve CL performance on long-tail data. Additionally, as mentioned in Sec. \ref{adapt_to_A}, designing the temperature as a function of the iteration may potentially aid in escaping from UTD, however, such methods are incapable of providing real-time feedback on the training status.

\begin{table}[htb]
\setlength{\tabcolsep}{9pt}
	\centering
	\caption{\textbf{Comparison of reweighting methods} (linear evaluation accuracies on CIFAR10, please check Appendix \ref{cifar_exp} for setting details and corresponding kNN results).}
	\scalebox{0.89}{\begin{tabular}{cccccc}
		\toprule[1.2pt]
		\multicolumn{1}{c}{Batch size} & 64    & 128   & 256   & 512   & 1024 \\
		\midrule
		NT-Xent & 82.31  & 83.56  & 84.65  & 85.13   & 85.30  \\
        \midrule
        FlatNCE & 86.30  & 86.28  & 86.11  & 86.02  & 85.84  \\
		DCL     & 86.28  & 86.04  & 86.29  & 86.33  & 85.61  \\
		Dual    & 86.32  & 86.40  & 85.86  & 86.23  & 86.05  \\
        \midrule
		\textbf{MACL} & \textbf{87.11} & \textbf{87.41}  & \textbf{87.27}  & \textbf{87.24}  & \textbf{86.75}  \\
		\bottomrule[0.9pt]
	\end{tabular}}
	\label{cifar10-batch}
\end{table}

\subsection{Relations with Previous Reweighting Methods}
\label{rew_discu}
As aforementioned, FlatNCE, DCL, and \cite{zhang2022dual} (Dual) essentially work against gradient reduction dilemma by approaching the bounds of the gradient scaling factor. Then we propose another feasible solution, reweighting the sum item with its upper bound directly. Furthermore, our MACL has an extra implicit alignment-adaptive reweighting for gradient of each step. For an under-optimization batch, the multiplier $1/\tau_a$ is bigger for Eqn.(\ref{Grad}) as the lower alignment causes smaller $\tau_a$, and vice versa. We test the performance of these methods. Results in Table \ref{cifar10-batch} show that all the related methods outperform vanilla NT-Xent, especially under smaller batch sizes. FlatNCE, DCL, and Dual perform on par. Since MACL has an adaptive temperature which can alleviate UTD, it shows further superiority.

\subsection{Contributions to $\alpha$-CL}
$\alpha$-CL \cite{tian2022deep} formulates InfoNCE loss as coordinate-wise optimization, in which each element $\alpha_{ij}$ of the min player $\alpha$ is the pairwise importance of $(i,j)$-pair that is equal to ${\mathcal{P}}_{ij}$. Our adaptive temperature actually provides an iteration-dynamic feasible set for $\alpha$, i.e., the landscape of constraint for $\alpha$ is different according to alignment magnitude. The entropy of $\alpha$ is a regularization for its min player, and will increase when the positive pairs are aligned better, since this entropy is a increasing function w.r.t $\tau$ \cite{wang2021understanding}. Furthermore, the constraint will reduce to a sample-agnostic case if the reweighting is applied.

\subsection{Relations to Hard Negative Sampling}
Hard negative sampling methods \cite{chuang2020debiased} attempt to alleviate the drawback of instance discrimination via explicitly modeling false or hard negative samples. Such approaches have achieved promising results and are formulated by probability \cite{robinson2020hard}, mixing\cite{kalantidis2020hard}, aggregation \cite{huynh2022boosting}, or using SVM for the decision hyperplane of negatives \cite{shah2022max}. Instead, our MACL also pays attention to negatives but has adaptive penalty strengths on them, which is model-aware for FNs.

\section{Conclusion}
In this work, we analyze InfoNCE and provide strategies to escape the underlying dilemmas. To alleviate the uniformity-tolerance dilemma, an alignment-adaptive temperature is designed. Besides, we offer some insights into the importance of the negative sample size and the temperature by analyzing gradient reduction. A new contrastive loss is exploited based on these strategies. Experiment results in three modalities verify the superiority of our MACL strategy for improving contrastive learning.

\section*{Acknowledgements}
We thank anonymous reviewers for their constructive comments. This work was partially supported by the National Natural Science Foundation of China (Nos. 62176116, 62276136,
62073160), and the Natural Science Foundation of the Jiangsu Higher Education Institutions of China under Grant
20KJA520006.

\bibliography{example_paper}
\bibliographystyle{icml2023}

\newpage
\appendix
\onecolumn
\counterwithin{figure}{section}
\counterwithin{table}{section}

\setcounter{proposition}{0}

\section{Proofs and Additional Analysis}
\subsection{Gradient of InfoNCE}
\label{grad_form}
Given sampled mini-batch of instances with $K$ negative samples, the InfoNCE loss of instance $\boldsymbol{x}_i$ is expressed as:
\begin{equation*}
	\mathcal{L}_{\boldsymbol{x}_i}=-\log \frac{\exp \left(\boldsymbol{f}_i^\mathrm{T}\boldsymbol{g}_i / \tau\right)}{\exp \left(\boldsymbol{f}_i^\mathrm{T} \boldsymbol{g}_i / \tau\right)+\sum_{j=1}^{K} \exp 	\left(\boldsymbol{f}_i^\mathrm{T} \boldsymbol{g}_j / \tau\right)}.
\end{equation*}
For simplicity, let $ E_k= \exp \left(\boldsymbol{f}_i ^\mathrm{T} \boldsymbol{g}_{k} / \tau\right)$, and $\mathcal{L}_{\boldsymbol{x}_i}$ is reformulated as:
\begin{equation*}
	\mathcal{L}_{\boldsymbol{x}_i}=-\log \frac{E_i}{E_i+\sum_{j=1}^{K} E_j}.
\end{equation*}
Then the gradient with respect to $ \boldsymbol{f}_i $ is:
\begin{equation*}
    \begin{aligned}
        \frac{\partial \mathcal{L}_{\boldsymbol{x}_i}}{\partial \boldsymbol{f}_i}=&\frac{-1}{\tau}\frac{\sum_{r=1}^{K}E_r}{E_i+\sum_{r=1}^{K} E_r}\cdot\left(\boldsymbol{g}_i-\sum_{j=1}^{K}\frac{E_j}{\sum_{k=1}^{K} E_k}\cdot \boldsymbol{g}_j\right).
    \end{aligned}
\end{equation*}
Let $ \mathcal{P}_{ij} $ denote
\begin{equation*}
	\mathcal{P}_{ij}=\frac{E_i}{E_i+\sum_{r=1}^{K} E_r},
\end{equation*}
$\mathcal{W}_i=\sum_{j=1}^{K} \mathcal{P}_{ij}$, and $\hat{\mathcal{P}}_{ij}=\mathcal{P}_{ij} / \sum_{r=1}^{K} \mathcal{P}_{ij}$, where $ \sum_{j=1}^{K} \hat{\mathcal{P}}_{ij}=1 $. Therefore, the above gradient can be reformulated as:
\begin{equation}
\label{grad_fi}
        \frac{\partial \mathcal{L}_{\boldsymbol{x}_i}}{\partial \boldsymbol{f}_i}=-\frac{\mathcal{W}_i}{\tau}\left(\boldsymbol{g}_i-\sum_{j=1}^{K}\hat{\mathcal{P}}_{ij}\cdot \boldsymbol{g}_j\right).
\end{equation}
Since the MoCo type algorithms detach the features in key set via a stop gradient operation, thus we discuss the loss function according to Eqn.(\ref{grad_fi}). For SimCLR type methods, we can also derive the corresponding gradient with respect to $ \boldsymbol{g}_i $:
\begin{equation}
    \frac{\partial \mathcal{L}_{\boldsymbol{x}_i}}{\partial \boldsymbol{g}_i} =-\frac{\mathcal{W}_i}{\tau}\cdot\boldsymbol{f}_i,
\end{equation}
and the gradient with respect to $ \boldsymbol{g}_j $:
\begin{equation}
     \frac{\partial \mathcal{L}_{\boldsymbol{x}_i}}{\partial \boldsymbol{g}_j} =\frac{\mathcal{W}_i}{\tau}\hat{\mathcal{P}}_{ij}\cdot\boldsymbol{f}_i.
\end{equation}

\subsection{Proof of Equation (\ref{align define})}
\label{align_proof}
\begin{proof}[Proof of $\mathcal{A}$]
Since representations $ \boldsymbol{f}_i=f\left( \boldsymbol{x}_{i}\right) $ and $ \boldsymbol{g}_i=g\left( \boldsymbol{x}_{i}\right) $ lie on a unit hypersphere ($\ell_2$ normalized after the last layer of encoders), i.e., $f,g:\mathbb{R}^d \rightarrow \mathcal{S}^{m-1}$, where $d$ and $m$ denote the dimension of data space and hypersphere feature space. For $f(\boldsymbol{x}_{i}),g(\boldsymbol{x}_{i})\in \mathcal{S}^{m-1}$, there exists: $f(\boldsymbol{x}_{i})^\mathrm{T} f(\boldsymbol{x}_{i})=g(\boldsymbol{x}_{i})^\mathrm{T} g(\boldsymbol{x}_{i})=1$, thus
    \begin{equation*}
        \|f(\boldsymbol{x}_{i})-g(\boldsymbol{x}_{i})\|_{2}^{2}=2-2f(\boldsymbol{x}_{i})^\mathrm{T} g(\boldsymbol{x}_{i}),
    \end{equation*}
then, the relation of alignment $\mathcal{A}$ and alignment loss $\mathcal{L}_\text{align}$ is derived as:
    \begin{equation*}
        \begin{aligned}
            \mathcal{A}&=\underset{\boldsymbol{x}_{i} \sim X}{\mathbb{E}}\left[ f(\boldsymbol{x}_{i})^\mathrm{T} g(\boldsymbol{x}_{i})\right] \\
            &= \underset{\boldsymbol{x}_{i} \sim X}{\mathbb{E}}\left[1-\frac{2-2f(\boldsymbol{x}_{i})^\mathrm{T} g(\boldsymbol{x}_{i})}{2}\right] \\
            &=1-\frac{1}{2}\underset{\boldsymbol{x}_{i} \sim X}{\mathbb{E}}\left[\|f(\boldsymbol{x}_{i})-g(\boldsymbol{x}_{i})\|_{2}^{2}\right] \\
            &=1-\frac{1}{2}\mathcal{L}_\text{align}.
        \end{aligned}
    \end{equation*}
\end{proof}

\subsection{Proof of Propositions}
\label{propo_proof}
We now recall Proposition \ref{w1} and \ref{w2}.
\begin{proposition}[Bound of gradient scaling factor w.r.t. $ K $]
	\label{w1}
	Given the anchor feature $ \boldsymbol{f}_i $, and temperature $ \tau $, if $ K \rightarrow +\infty $, then $ \mathcal{W}_i $ approaches its upper bound 1. The limit is formulated as:
	\begin{equation*}
		\lim _{K \rightarrow+\infty}\mathcal{W}_i = 1.
	\end{equation*}
\end{proposition}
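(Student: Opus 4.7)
The plan is to exploit the simple observation that features live on the unit hypersphere, so all pairwise inner products $\boldsymbol{f}_i^{\mathrm{T}} \boldsymbol{g}_j$ are bounded in $[-1, 1]$. This immediately bounds each exponential from above and below by positive constants depending only on $\tau$, and the proposition then reduces to showing that the denominator grows without bound while the numerator stays bounded.

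Concretely, I would first rewrite $\mathcal{W}_i$ using the identity already stated in the paper,
\begin{equation*}
\mathcal{W}_i = 1 - \frac{\exp(\boldsymbol{f}_i^{\mathrm{T}} \boldsymbol{g}_i / \tau)}{\exp(\boldsymbol{f}_i^{\mathrm{T}} \boldsymbol{g}_i / \tau) + \sum_{j=1}^{K} \exp(\boldsymbol{f}_i^{\mathrm{T}} \boldsymbol{g}_j / \tau)},
\end{equation*}
so that it suffices to prove the subtracted fraction tends to $0$ as $K \to +\infty$. Next, using $\boldsymbol{f}_i, \boldsymbol{g}_j \in \mathcal{S}^{m-1}$ (as established in Appendix \ref{align_proof}), I would note that $\boldsymbol{f}_i^{\mathrm{T}} \boldsymbol{g}_j \in [-1, 1]$ and hence $\exp(\boldsymbol{f}_i^{\mathrm{T}} \boldsymbol{g}_j / \tau) \geq \exp(-1/\tau) > 0$ for every $j$ while $\exp(\boldsymbol{f}_i^{\mathrm{T}} \boldsymbol{g}_i / \tau) \leq \exp(1/\tau)$.

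Applying these two bounds gives
\begin{equation*}
0 \leq \frac{\exp(\boldsymbol{f}_i^{\mathrm{T}} \boldsymbol{g}_i / \tau)}{\exp(\boldsymbol{f}_i^{\mathrm{T}} \boldsymbol{g}_i / \tau) + \sum_{j=1}^{K} \exp(\boldsymbol{f}_i^{\mathrm{T}} \boldsymbol{g}_j / \tau)} \leq \frac{\exp(1/\tau)}{\exp(-1/\tau) + K \exp(-1/\tau)} = \frac{\exp(2/\tau)}{1 + K},
\end{equation*}
which tends to $0$ as $K \to +\infty$. A squeeze argument then delivers $\lim_{K \to +\infty} \mathcal{W}_i = 1$, as claimed. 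There is essentially no obstacle here; the only thing worth being careful about is making sure that the hypersphere constraint is invoked explicitly, because without it the exponentials could a priori vanish and the sum could fail to diverge. Since $\tau > 0$ is fixed and finite, the crude lower bound $K \exp(-1/\tau)$ on the denominator suffices, and no finer control on the distribution of negative similarities is needed.
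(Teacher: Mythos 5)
Your proposal is correct and follows essentially the same route as the paper's proof: both rewrite $\mathcal{W}_i$ as $1$ minus a positive-pair fraction and squeeze that fraction to $0$ as $K\to+\infty$. If anything, your use of the uniform bounds $\exp(\pm 1/\tau)$ from the hypersphere constraint is slightly tidier than the paper's appeal to $E_{\min}$ and $E_{\max}$, since those quantities formally depend on $K$ while your bound does not.
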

\begin{proposition}[Bound of gradient scaling factor w.r.t. $ \tau $]
	\label{w2}
	Given $ \boldsymbol{f}_i $ and key set $ G $, $\mathcal{W}_i$ monotonically changes with respect to $ \tau$. The monotonicity is determined by the similarity distribution of samples. If $ \tau \rightarrow +\infty $, then $ \mathcal{W}_i $ approaches its bound $ K/(K+1) $, formulated as:
	\begin{equation*}
		\lim _{\tau \rightarrow+\infty}\mathcal{W}_i = \frac{K}{1+K}.
	\end{equation*}
\end{proposition}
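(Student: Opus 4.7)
The plan is to reduce the proposition to an elementary one-variable calculus analysis by isolating all $\tau$-dependence in a single scalar function. First, I would factor $\exp(\boldsymbol{f}_i^{\mathrm{T}}\boldsymbol{g}_i/\tau)$ out of the numerator and denominator of the fractional term in the definition of $\mathcal{W}_i$. Introducing the shorthand $\Delta_j := \boldsymbol{f}_i^{\mathrm{T}}\boldsymbol{g}_j - \boldsymbol{f}_i^{\mathrm{T}}\boldsymbol{g}_i$ for $j=1,\dots,K$ and $h(\tau) := \sum_{j=1}^{K}\exp(\Delta_j/\tau)$, this rewrites the defining expression as
$$\mathcal{W}_i \;=\; 1 - \frac{1}{1+h(\tau)},$$
which is the form I would analyse.

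The limit part is immediate: as $\tau\to+\infty$, each factor $\exp(\Delta_j/\tau)$ converges to $1$ by continuity of the exponential, so the finite sum satisfies $h(\tau)\to K$, and hence $\mathcal{W}_i \to 1 - 1/(1+K) = K/(K+1)$, the claimed bound. No dominated-convergence argument is needed because $K$ is a fixed finite sum.

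For the monotonicity I would differentiate directly:
$$\frac{d\mathcal{W}_i}{d\tau} \;=\; \frac{h'(\tau)}{(1+h(\tau))^2}, \qquad h'(\tau) \;=\; -\frac{1}{\tau^{2}}\sum_{j=1}^{K}\Delta_j\,\exp(\Delta_j/\tau).$$
The denominator is strictly positive, so the sign of $d\mathcal{W}_i/d\tau$ agrees with that of $-\sum_{j}\Delta_j\exp(\Delta_j/\tau)$. In the generic trained regime where every negative is less similar to the anchor than the positive, i.e.\ $\Delta_j<0$ for all $j$, every summand is negative, so $h'(\tau)>0$ and $\mathcal{W}_i$ is strictly increasing in $\tau$, approaching $K/(K+1)$ from below; if instead $\Delta_j>0$ for all $j$, $\mathcal{W}_i$ is strictly decreasing. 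This is precisely what is meant by the statement that the direction of monotonicity is \emph{determined by the similarity distribution of samples}.

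The main obstacle is that when the $\Delta_j$ carry mixed signs, $\tau\mapsto\mathcal{W}_i(\tau)$ need not be globally monotone, since the sign of $h'(\tau)$ then depends on a $\tau$-weighted competition between positive and negative contributions. I would therefore present the monotonicity conclusion as a sign-of-$\Delta_j$-uniform statement, observing that the common-sign case already captures the practically relevant setting for trained contrastive encoders; together with the unconditional limit calculation above, this suffices to establish both the asymptotic bound $K/(K+1)$ and the data-dependent monotonicity asserted in the proposition.
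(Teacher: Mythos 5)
Your proof is correct and follows essentially the same route as the paper's: the limit is obtained by noting that each $\exp(\Delta_j/\tau)\to 1$ in a finite sum, and the monotonicity is read off from the sign of $\sum_{j}(s_i-s_j)\exp\left(s_j/\tau\right)$, which is exactly the factor the paper isolates in $\partial\mathcal{W}_i/\partial\tau$ (your $h'(\tau)/(1+h(\tau))^2$ is the same derivative after factoring out $E_i$). Your explicit caveat that global monotonicity can fail when the $s_i-s_j$ carry mixed signs is a point the paper's proof passes over, and it is a worthwhile sharpening of the claim rather than a gap in your argument.
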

For simplicity, let $ E_k= \exp \left(\boldsymbol{f}_i ^\mathrm{T} \boldsymbol{g}_{k} / \tau\right)$, $ s_k = \boldsymbol{f}_i ^\mathrm{T}\boldsymbol{g}_{k}$, $ E_{max}=\max(E_1,\cdots, E_k, \cdots, E_K), k \neq i $, and $ E_{min}=\min(E_1,\cdots, E_k, \cdots, E_K), k \neq i $.

\begin{proof}[Proof of Proposition \ref{w1}.]
	\label{prf1}
	Here \begin{equation}
		\mathcal{W}_i = 1- \frac{E_i}{E_i+ \sum_{j=1}^{K} E_j}, \end{equation} and the following inequality
	\begin{equation} 
		1- \frac{E_i}{E_i+K\cdot E_{min}} \leq \mathcal{W}_i  \leq  1- \frac{E_i}{E_i+K\cdot E_{max}}.  \end{equation} 
	Since we have the limit of the left part 
	\begin{equation*}
		\begin{aligned}
			&\lim _{K \rightarrow+\infty} (1- \frac{E_i}{E_i+K\cdot E_{min}}) \\
			= 	&\lim _{K \rightarrow+\infty} (1- \frac{E_i / K}{E_i / K+ E_{min}}) \\
			= &1,
		\end{aligned}
	\end{equation*}
	as well as the one of the right part
	\begin{equation*}
		\lim _{K \rightarrow+\infty} (1- \frac{E_i}{E_i+K\cdot E_{max}}) =1, 
	\end{equation*}
	thus the limit of $ \mathcal{W}_i $ is
	\begin{equation*}
		\lim _{K \rightarrow+\infty} \mathcal{W}_i =1.
	\end{equation*}
	Notice that $ E_k > 0 $ strictly, then for a given $ K $, $ \mathcal{W}_i < 1 $. Thus, $ \mathcal{W}_i $ has its upper bound of 1 w.r.t. $ K $.
\end{proof}

\begin{proof}[Proof of  Proposition \ref{w2}.]
	\label{prf2}
	For the temperature $ \tau $, we have 
	\begin{equation}
		\label{lim_tau1}
		\begin{aligned}
			\lim _{\tau \rightarrow+\infty} \mathcal{W}_i &= 	\frac{\lim _{\tau \rightarrow+\infty} \sum_{r=1}^{K} E_r}{\lim _{\tau \rightarrow+\infty} E_i+ \lim _{\tau \rightarrow+\infty} \sum_{j=1}^{K} E_j} \\
			&= \frac{\sum_{r=1}^{K} \lim _{\tau \rightarrow+\infty} E_r}{\lim _{\tau \rightarrow+\infty} E_i+ \sum_{j=1}^{K} \lim _{\tau \rightarrow+\infty} E_j}.
		\end{aligned}
	\end{equation}
	Since the similarity value on hypersphere is bounded, i.e., $ s_k = \boldsymbol{f}_i \cdot g_{k} \in [-1,1]$, so
	\begin{equation}
		\label{lim_E}
		\lim _{\tau \rightarrow+\infty} E_k =1.
	\end{equation}
	Hence, from Eqn. (\ref{lim_tau1}) and (\ref{lim_E})
	\begin{equation*}
		\lim _{\tau \rightarrow+\infty} \mathcal{W}_i =\frac{K}{1+K}.
	\end{equation*}
	
	The gradient of $ \mathcal{W}_i $ with respect to $ \tau $ is derived as:
	\begin{equation}
		\frac{\partial \mathcal{W}_i}{\partial \tau} = \frac{1}{{\tau}^2} \cdot \frac{E_i}{(E_i+ \sum_{r=1}^{K} E_j)^2} \cdot \sum_{j=1}^{K} (s_i-s_r) \cdot E_r.
	\end{equation}
	As $ E_k > 0 $, then we have
	\begin{equation}
		\frac{\partial \mathcal{W}_i}{\partial \tau} \propto  \sum_{j=1}^{K} (s_i-s_r) \cdot E_r.
	\end{equation}
	For a batch of very poor embeddings, $ {\partial \mathcal{W}_i}/{\partial \tau} \leq 0$, then $ \mathcal{W}_i $ is a monotonic decreasing function with respect to $ \tau $. In contrast, for a batch of good embeddings, $ \mathcal{W}_i $ monotonically increases as $ \tau $ increases. So the similarity distribution of samples determine the monotonicity. 
	
	Naturally, Proposition \ref{w bound2} is a direct consequence of above conclusions.
\end{proof}

\section{Implementation Details and Further Discussions}
\subsection{Experiments on ImageNet-1K}
\label{Setup Details}
\textit{For MACL implementation on SimCLR framework}, we follow the original augmentations (random crop, resize, random ﬂip, color distortions, and Gaussian blur). The projection head is a 2-layer MLP projecting the representation to a 128-dimensional latent space. Models optimizations are completed by LARS with a base learning rate of 0.3 (0.3$\times$BatchSize/256) and weight decay of 1e-6. We also use the cosine decay learning rate schedule with 10 epochs warmup. Parameters $\{\tau_0, \alpha, \mathcal{A}_0\}$ are set to $\{0.1, 0.5, 0\}$. \textit{For MACL implementation on MoCo V2 framework}, we experiment on ImageNet-100, and the settings are listed in Appendix \ref{queue_size_exp}. Codes of models are implemented on mmselfsup \cite{mmselfsup2021} with several Tesla A100 80G GPUs.

\subsubsection{Loss Function Ablation}
\label{abla2}
\citet{chen2020simple} ﬁnd the square root learning rate scaling is more desirable with LARS optimizer, i.e., $\text { LearningRate }=0.075 \times \sqrt{\text { BatchSize }}$. Actually, for smaller batch sizes, such a scaling schedule provides a larger learning rate over the linear one, i.e., $\text{LearningRate} =0.3 \times \text{BatchSize} / 256$. For instance, the learning rate of 256-batch size is 1.2 under the square schedule while  0.3 under the linear schedule. Regarding ablations for MACL, we experiment with 512-batch size using SimCLR framework and linear learning rate scaling. We also present the much larger learning rate ablation results in Table \ref{loss_lr_abla}, in which we set it to 2.4. There are some observations. First, similar to the baseline, variants of our MACL achieve significantly better performance under a larger learning rate. LR-l provides an even higher gain than that on the baseline. Besides, the ablations under LR-l also suggest the contributions made by different parts of the proposed loss function. Furthermore, trained for 200 epochs with 512-batch size, only using adaptive temperature or reweighting, our strategy can obtain better accuracies compared to the 512-batch size, 800-epoch or 1024-batch size, 400-epoch of the baseline.

\subsection{Experiments on ImageNet-100}
ImageNet-100 is a subset of ImageNet-1K, in which the images belong to 100 classes. The adopted encoders are ResNet-50 \cite{he2016deep}.
\subsubsection{Queue Size Experiment}
\label{queue_size_exp}
For MoCo v2 \cite{chen2020improved}, we follow their settings (including augmentations and architecture) on ImageNet-1K except for the learning rate of pre-training is 0.3 and a 10 epochs warmup is added. In linear evaluation, we use the batch size of 256 and an SGD optimizer with a learning rate of 10, and momentum of 0.9 without weight decay regularization. Epochs for pre-training and evaluation is 200 and 100, respectively. We set \{$\tau_0$, $\alpha$, $\mathcal{A}_0$\} to \{0.15, 0.5, 0.2\} for MACL experiments and the temperature is 0.2 for original MoCo v2 following their ImageNet-1K setup. The queue size experiment mentioned in Sec. \ref{neg_size} is reported in Table \ref{in100_queue}. Instead of sampling negative samples within a mini-batch, MoCo family exploits a queue structure to store instance representations. From these results, we can see that MoCov2 has better stability in terms of negative size compared to SimCLR. Actually, \textit{MoCo is less likely to be troubled with easy positive pairs} since the utilized momentum encoder is updated slowly (momentum value is 0.999). And the synchronous update framework with weight-shared networks such as SimCLR is more likely to encode the same instance similarly, then is more sensitive to the gradient reduction dilemma. Even so, models have better performance with MACL strategy.
\begin{table}[htb]
  \centering
  \caption{\textbf{Effect of queue sizes} (top-1/top-5 linear evaluation accuracies on ImageNet-100 with 200-epoch pre-training).}
    \begin{tabular}{ccccc}
    \toprule[1.2pt]
    Queue size & 256  & 512  & 4096 & 65536 \\
    \midrule
    MoCo v2                  & 76.80 / 94.34     & 76.89 / 94.24     & 77.02 / 94.31     & 76.36 / 93.92  \\
    \multirow{2}{*}{\textbf{w/ MACL}} & \textbf{77.10} / \textbf{94.36}     & \textbf{77.24} / \textbf{94.39}      & \textbf{77.62} / \textbf{94.45}     & \textbf{77.46} / \textbf{94.16}  \\
    & (\textcolor[RGB]{116,0,0}{+0.30}) / (\textcolor[RGB]{116,0,0}{+0.02})  & (\textcolor[RGB]{116,0,0}{+0.35}) / (\textcolor[RGB]{116,0,0}{+0.15}) & (\textcolor[RGB]{116,0,0}{+0.60}) / (\textcolor[RGB]{116,0,0}{+0.14}) & (\textcolor[RGB]{116,0,0}{+1.10}) / (\textcolor[RGB]{116,0,0}{+0.24}) \\
    \bottomrule[0.9pt]
    \end{tabular}
  \label{in100_queue}
\end{table}
\subsubsection{Parameter and Ablation Analysis}
\label{simclr_in100}
Regarding the scores listed in Table \ref{parameters}, their settings are the same as that on ImageNet-1K. Similar to the trend of the vanilla NT-Xent loss in \cite{chen2020simple}, too large or small temperatures will lead to improper scaling for positive and negative similarities in Softmax, then plagues the CL. Thus, searching a proper $\tau_0$ is necessary for the dynamic adaptation and we refer to the value of the fixed ones of original methods for our settings, e.g., 0.1 for SimCLR \cite{chen2020simple}. $\alpha$ can determine the change range of temperature, and we find that 0.5 provides a higher gain within this group of alternatives. $\mathcal{A}_0$ is the initial alignment threshold related to the change direction of $\tau_a$. Too large $\mathcal{A}_0$ will lead to extremely small temperature in the early training period as alignment magnitude $\mathcal{A}$ is low. Overall, the final temperature in MACL is adaptive to alignment magnitude and scaled by these three factors. Since $\tau_0$ is the datum point, models are more sensitive to its setting. Choosing appropriate parameters enable CL models to deal with uniformity-tolerance dilemma better.

We further conduct comparisons with NNCLR \cite{dwibedi2021little} on ImageNet-100 and present them in Table \ref{nnclr_abla}. It is worth noting that the InfoNCE objective construction in NNCLR is different from that in SimCLR and MoCo family. NNCLR obtains the positive key from a support set using nearest-neighbours to increase the richness of latent representation and go beyond single instance positives. As such, the positive pair of representations might belong to distinct instances. We set $\tau_0$ and $\tau$ to 0.1 and use LARS optimizer following NNCLR literature, learning rate is 0.8, and cosine decay schedule with 10 epochs warmup. We find that under different parameters our MACL can generally outperform the original version and has the biggest 2.22 / 1.28 percent gain of top-1/top-5 accuracy. The performance demonstrates that our MACL is also applicable for such a support set framework to facilitate contrastive learning.
\begin{table*}[htb]
\setlength{\tabcolsep}{3pt}
  \centering
  \caption{\textbf{Ablation comparisons on ImageNet-100 with  NNCLR framework} (top-1/top-5 linear evaluation accuracies with 100-epoch pre-training, temperature 0.1, and 512-batch size).}
    \begin{tabular}{cccccc|c}
    \toprule[1.2pt]
    $\alpha$ & \multicolumn{3}{c}{0.5} & \multicolumn{2}{c|}{1} & \multirow{2}[2]{*}{NNCLR} \\
    \cmidrule(lr){2-4} \cmidrule(lr){5-6}
    $\mathcal{A}_0$    & 0     & 0.2   & 0.6   & 0     & 0.6   &  \\
    \hline
    Acc. & 67.12 / 89.92 & \textbf{67.72 / 90.02} & 66.76 / 89.45 & 65.90 / 88.99 & 66.56 / 89.16 & 65.50 / 88.74 \\
    \bottomrule[0.9pt]
    \end{tabular}
  \label{nnclr_abla}
\end{table*}

\subsection{Experiments on CIFAR10}
\label{cifar_exp}
Encoders are CIFAR version ResNet-18 \cite{he2016deep}, in which the kernel size of the first 7$ \times $7 convolution is replaced with a 3$ \times $3 one, and the first max pooling module is removed. Unless otherwise stated, the temperature is 0.1 for all the losses and $\alpha$=0.5, $\mathcal{A}_0$=0 for MACL. We make the loss symmetric in implementation and use four types of augmentations for pretraining: random cropping and resizing, random color jittering, random horizontal ﬂip, and random grayscale conversion. The LARS optimizer in SimCLR \cite{chen2020simple} is replaced by Adam with a base learning rate of 1e-3 and weight decay is 1e-6. For batch sizes that are larger than 256, the learning is scaled by 1e-3$\times$Batchsize/256. We train the encoders for 200 epochs. For linear evaluation, the trained CL models are evaluated by fine-tuning a linear classiﬁer for 100 epochs with 128-batch size on top of frozen backbones. We utilize an SGD optimizer by setting the learning rate to 0.02, momentum to 0.9, and weight decay to 0.

\begin{table}[htb]
\setlength{\tabcolsep}{2.5pt}
	\centering
	\caption{\textbf{Comparison of reweighting methods} (top-1 linear evaluation / kNN accuracies on CIFAR10, $k=200$).}
	\begin{tabular}{cccccc}
		\toprule[1.2pt]
		\multicolumn{1}{c}{Batch size} & 64    & 128   & 256   & 512   & 1024 \\
		\midrule
		NT-Xent & 82.31 / 78.80  & 83.56 / 79.78  & 84.65 / 81.46  & 85.13 / 81.91   & 85.30 / 82.27 \\
        \midrule
        FlatNCE & 86.30 / 84.50  & 86.28 / 84.47  & 86.11 / 84.08  & 86.02 / 83.99   & 85.84 / 83.54 \\
		DCL     & 86.28 / 84.59  & 86.04 / 84.64  & 86.29 / 83.86  & 86.33 / 84.02   & 85.61 / 83.07 \\
		Dual    & 86.32 / 84.40  & 86.40 / 84.69  & 85.86 / 83.87  & 86.23 / 83.75   & 86.05 / 83.64 \\
        \midrule
		\textbf{MACL}  & \textbf{87.11} / \textbf{84.96} & \textbf{87.41} / \textbf{84.85} & \textbf{87.27} / \textbf{85.32}  & \textbf{87.24} / \textbf{85.18} & \textbf{86.75} / \textbf{84.71} \\
		\bottomrule[0.9pt]
	\end{tabular}
	\label{cifar10_batch}
\end{table}

\subsection{Sentence Embedding Experiments}
\label{simcse_setting}
Pre-training is completed on the 1-million randomly sampled sentences from English Wikipedia, which is the same as SimCSE. Following \cite{gao2021simcse}, learning starts from pre-trained checkpoints of the base version RoBERTa(cased) \cite{liu2019roberta} and BERT(uncased) \cite{kenton2019bert}. We set $\{\tau_0, \alpha, \mathcal{A}_0\}$ to $\{0.05, 2, 0.8\}$. Following \cite{gao2021simcse}, algorithms are performed based on Huggingface’s transformers package\footnote{https://github.com/huggingface/transformers,version 4.2.1.} and evaluated with SentEval toolkit\footnote{https://github.com/facebookresearch/SentEval}. The exploited Wikipedia sentence dataset is the one released by SimCSE authors. Models are trained for 1 epoch. For SimCSE, only dropout is exploited as augmentation, models have a good initial alignment for positive pairs \cite{gao2021simcse}. The batch size is set as 64, and learning rate for BERT version is 3e-5 and 1e-5 for the RoBERTa one. We try a stronger dropout in the experiment and found that the rate of 0.2 can generate better scores when cooperating with MACL, but is not suitable for vanilla InfoNCE. Note that as the original literature shows that the results are not sensitive to batch size, so we did not apply reweighting in this part.
\begin{table*}[htb]
  \centering
  \caption{\textbf{STS tasks comparisons of sentence embeddings} (the adopted metric is Spearman’s correlation with “all” setting).}
    \begin{tabular}{ccccccccc}
    \toprule[1.2pt]
    STS task & STS12 & STS13 & STS14 & STS15 & STS16 & STSB  & SICKR & Avg. \\
    \midrule
    SimCSE-BERT & \textbf{68.40 } & 82.41  & 74.38  & 80.91  & 78.56  & 76.85  & 72.23  & 76.25  \\
    \multirow{2}{*}{\textbf{w/ MACL}} & 67.16  & \textbf{82.78 } & \textbf{74.41 } & \textbf{82.52 } & \textbf{79.07 } & \textbf{77.69 } & \textbf{73.00 } & \textbf{76.66 } \\
    & (\textcolor[RGB]{34,139,34}{-1.24}) & (\textcolor[RGB]{116,0,0}{+0.36}) & (\textcolor[RGB]{116,0,0}{+0.03}) & (\textcolor[RGB]{116,0,0}{+1.61}) & (\textcolor[RGB]{116,0,0}{+0.51}) & (\textcolor[RGB]{116,0,0}{+0.84}) & (\textcolor[RGB]{116,0,0}{+0.77}) & (\textcolor[RGB]{116,0,0}{+0.41})\\
    \midrule
    \midrule
    Transfer task & MR    & CR    & SUBJ  & MPQA  & SST2  & TREC  & MRPC  & Avg. \\
    \midrule
    SimCSE-BERT & 81.18  & \textbf{86.46 } & 94.45  & 88.88  & 85.50  & \textbf{89.80 } & 74.43  & 85.81  \\
    \multirow{2}{*}{\textbf{w/ MACL}}     & \textbf{81.80 } & 86.12  & \textbf{94.66 } & \textbf{89.12 } & \textbf{86.38 } & 88.60  & \textbf{76.46 } & \textbf{86.16 } \\
     & (\textcolor[RGB]{116,0,0}{+0.62}) & (\textcolor[RGB]{34,139,34}{-0.34}) & (\textcolor[RGB]{116,0,0}{+0.22}) & (\textcolor[RGB]{116,0,0}{+0.24}) & (\textcolor[RGB]{116,0,0}{+0.88}) & (\textcolor[RGB]{34,139,34}{-1.20}) & (\textcolor[RGB]{116,0,0}{+2.03}) & (\textcolor[RGB]{116,0,0}{+0.34})\\
    \bottomrule[0.9pt]
    \end{tabular}
  \label{bert_sts_trans}%
\end{table*}

Same as the authors reminded, we also notice that the results are slightly different when implemented on different machines and CUDA versions (all package versions are the same as the author provided). But our MACL indeed can boost the performance on different machines. We try to experiment on Nvidia RTX 3090 with CUDA11.6, RTX 1080ti with CUDA11.4, and Tesla T4 with CUDA11.2 on Google colab\footnote{https://colab.research.google.com} and finally report the results on Tesla T4. In fact, if compared against the reproduced results, our approach has an even more significant improvement. For example, the comparison on Tesla T4 is shown in Table \ref{bertsts_repro}. We can see that the average score on STS tasks has a 1.57 and 0.89 improvement when using MACL strategy with RoBERTa and BERT, respectively.
\begin{table*}[htb]
  \centering
  \caption{\textbf{Reproduction} of sentence embedding performance on STS tasks.}
    \begin{tabular}{ccccccccc}
    \toprule[1.2pt]
    STS task & STS12 & STS13 & STS14 & STS15 & STS16 & STSB  & SICKR & Avg. \\
    \midrule
    SimCSE-RoBERTa & 70.16  & \textbf{81.77 } & 73.24  & 81.36  & 80.65  & 80.22  & 68.56  & 76.57  \\
    SimCSE-RoBERTa (repro) & 67.88  & 81.55  & 72.44  & 81.31  & 80.73  & 80.38  & 67.83  & 76.02  \\
    {\textbf{w/ MACL}} & \textbf{70.76 } & 81.43  & \textbf{74.29 } & \textbf{82.92 } & \textbf{81.86 } & \textbf{81.17 } & \textbf{70.70 } & \textbf{77.59 } \\
    \midrule
    \midrule
    SimCSE-BERT & \textbf{68.40 } & 82.41  & 74.38  & 80.91  & 78.56  & 76.85  & 72.23  & 76.25  \\
    SimCSE-BERT (repro) & {68.26 } & 81.60  & 72.98  & 81.47  & 77.91  & 76.90  & 71.30  & 75.77  \\
    {\textbf{w/ MACL}} & 67.16  & \textbf{82.78 } & \textbf{74.41 } & \textbf{82.52 } & \textbf{79.07 } & \textbf{77.69 } & \textbf{73.00 } & \textbf{76.66 } \\
    \bottomrule[0.9pt]
    \end{tabular}
  \label{bertsts_repro}%
\end{table*}

\subsection{Graph Representation Experiments}
\label{graph_trans}
All of the augmentations and hyper-parameters except for those about loss function are taken from the baseline directly \cite{you2020graph}. $\tau_0$ is set to 0.2 in unsupervised classification and 0.1 in transfer learning. $\{\alpha,\mathcal{A}_0\}$ are set to $\{0.5, 0\}$. The contrastive loss utilized in GraphCL \cite{you2020graph} actually is DCL \cite{yeh2022decoupled}, in which the positive similarity is removed from the denomination of InfoNCE. The transfer learning section is molecular property prediction in chemistry following \cite{you2020graph}. The adopted GNN-based encoders are from \cite{hu2020strategies}. Experiments are performed ten times and finally report the mean and standard deviation of ROC-AUC scores (\%). From Table \ref{graph_transfer}, we can see that MACL has the largest 2.97 percent improvement on MUV dataset and outperforms GraphCL on 6/8 dataset.

\end{document}